\documentclass[conference]{IEEEtran}
\renewcommand{\IEEEtitletopspaceextra}{0.25in}
\IEEEoverridecommandlockouts
\usepackage{cite}
\usepackage[table]{xcolor}
\usepackage{amsmath,amssymb,amsfonts}
\usepackage{algorithm}
\usepackage{algorithmic}
\usepackage{graphicx}
\usepackage{amsthm}
\newtheorem{thm}{Theorem}

\newtheorem{lemma}[thm]{Lemma}

\newtheorem{assumption}{Assumption}
\usepackage{textcomp}
\usepackage{multirow}
\usepackage[utf8]{inputenc} % allow utf-8 input
\usepackage[T1]{fontenc}    % use 8-bit T1 fonts
\usepackage{hyperref}       % hyperlinks
\usepackage{url}            % simple URL typesetting
\usepackage{booktabs}       % professional-quality tables
\usepackage{nicefrac}       % compact symbols for 1/2, etc.
\usepackage{microtype}      % microtypography
\usepackage{subcaption}
\def\BibTeX{{\rm B\kern-.05em{\sc i\kern-.025em b}\kern-.08em
    T\kern-.1667em\lower.7ex\hbox{E}\kern-.125emX}}
    
\begin{document}

\title{Motion Planning with Model-Based Diffusion via Constraint Optimization and Adaptive Scheduling
}

\author{
\IEEEauthorblockN{
Zhilin He\textsuperscript{1},
Bowei Li\textsuperscript{1},
Jianlin Dou\textsuperscript{1},
Yuner Zhang\textsuperscript{2},
Changliu Liu\textsuperscript{1}
}
\IEEEauthorblockA{
\textsuperscript{1}Carnegie Mellon University,
\textsuperscript{2}University of Pennsylvania
}
\IEEEauthorblockA{
\texttt{\{hectorh, boweili, jdou2, cliu6\}@andrew.cmu.edu},
\texttt{\{yunerzh\}@seas.upenn.edu}
}
\IEEEauthorblockA{Code: \href{https://github.com/hhhhzl/genedynamics}{\texttt{https://github.com/hhhhzl/genedynamics}}}
}

\maketitle

\begin{abstract}
Single-Robot Motion Planning (SRMP) in highly non-convex constrained environments, where robots must satisfy collision-free guarantees, dynamic feasibility, and task-related constraints, is challenging under complex constraints and computational limits. Recent Model-Based Diffusion (MBD) approaches recast the SRMP as trajectory optimization that samples from a posterior over trajectories, using known dynamics, and analytically estimates the score function from rollout samples to guide diffusion denoising toward a low-cost, clean trajectory without demonstration learning. While existing works further adapt MBD to constrained environments and showcase promising performance, they are still limited by (1) enforcing safety either via soft feasibility diffusion priors or hard projection operators, but lack a unified framework to integrate both, and (2) fixing safety enforcement to neglect the changing of diffusion scheduling. Therefore, we introduce Model-Based Diffusion via Constraint Optimization and Adaptive Scheduling (MD-COAS) for SRMP that unifies the inexact Augmented Lagrangian Method (iALM) soft diffusion prior with a Convex Feasible Set (CFS)-based hard projection operator, and adaptively schedules and co-optimizes safety enforcement, along with diffusion scheduling. Experiments demonstrate that our method achieves higher safety \& success rates, faster convergence, and lower final costs than baseline planners on randomly generated highly non-convex 2D benchmarks and a 7-DoF robot arm avoidance task.
\end{abstract}

% \begin{IEEEkeywords}
% Motion Planning, Model-Based Diffusion, Constraint Optimization, Adaptive Scheduling
% \end{IEEEkeywords}

\section{Introduction}
\label{intro}
Single-Robot Motion Planning (SRMP) aims to generate a feasible trajectory from the initial to the goal in complex environments that satisfies obstacle-avoidance, collision-free guarantees, dynamic feasibility, and task-related constraints. Given these requirements, the challenge lies in solving for a collision-free trajectory under complex constraints and computational limits to optimize the task objective. 

Recent approaches recast SRMP as probabilistic inference, sampling from a posterior over trajectories using diffusion models~\cite{ho2020denoising, chi2025diffusion, janner2022planning}, whose score functions are learned from demonstration data, and then steering the diffusion denoising process to recover a clean trajectory. Instead of constructing probabilistic roadmaps (PRM)~\cite{kavraki1996probabilistic, la2011motion} or rapidly exploring random trees (RRT)~\cite{kuffner2000rrt, kleinbort2018probabilistic, karaman2011anytime}, diffusion models exploit demonstration-driven priors to bypass the global geometric search used in sampling-based planners, which typically degrades as environments grow larger and constraints become tighter. However, such diffusion-based methods depend on large-scale demonstration datasets for training and do not rigorously enforce dynamics within the pretrained generative model.

Model-Based Diffusion (MBD)~\cite{pan2024model} addresses this limitation by incorporating explicit dynamics and casting SRMP as a trajectory optimization problem; it can analytically estimate the score from rollout samples and drive the denoising process toward low-cost trajectories. Unlike conventional trajectory optimization methods~\cite{botev2013cross, williams2018information}, which tend to collapse to a single suboptimal solution, MBD retains multimodality through stochastic sampling. Building on MBD, more recent works introduce safety enforcement mechanisms to ensure the generated trajectory satisfies constraints~\cite{mishra2025eb,fishman2023diffusion,cheng2025safe, he2026}. However, they typically either encode constraints as soft feasibility via a diffusion prior or enforce them as a hard post-projection operator, but lack a unified framework that combines both; as a result, they struggle in highly non-convex constrained environments; moreover, they ignore annealing of diffusion scheduling by keeping the safety enforcement fixed from noisy (early) to clean (late) trajectories.

\begin{figure*}[!t]
    \centering
    % Use the pre-rendered PDF so arXiv does not need to invoke Inkscape.
    % The PDF was exported on a letter-size page, so crop it to the artwork.
    \includegraphics[width=\linewidth,trim=18bp 555bp 18bp 25bp,clip]{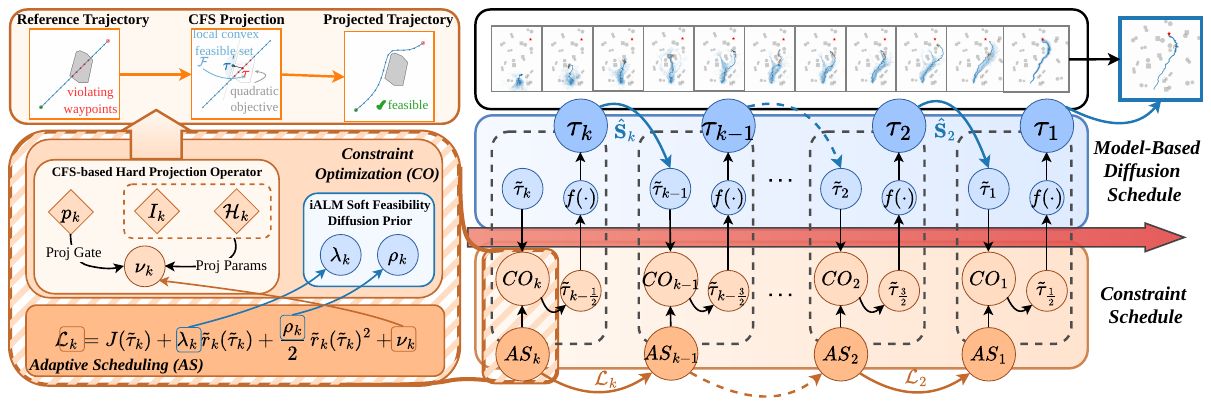}
    \caption{Overview of Constraint Optimization (CO) and Adaptive Scheduling (AS) (Bottom Left) and Model-Based Diffusion for Motion Planning (Bottom Right). Top Left: an example CFS projection correcting a violated reference trajectory. Top Right: an illustration of MD-COAS denoising from noisy samples to a clean trajectory.}
    \label{fig:placeholder}
    \vspace{-0.5cm}
\end{figure*}

We propose a novel motion planning method, \textbf{M}odel-Based \textbf{D}iffusion via \textbf{C}onstraint \textbf{O}ptimization and \textbf{A}daptive \textbf{S}cheduling (MD-COAS), formulating the SRMP problem as optimizing
{%
\setlength{\abovedisplayskip}{2.5pt}
\setlength{\belowdisplayskip}{2.5pt}
\setlength{\abovedisplayshortskip}{1.5pt}
\setlength{\belowdisplayshortskip}{1.5pt}
\begin{equation}
\begin{aligned}
& \tau^\star \in \arg\min_{\tau^{1:T}}\ \  \mathcal{J}(\tau^{1:T}) \\
& \quad \text{s.t.} \quad s^{t+1}=f(s^{t},u^{t}), \quad g^{t}(s^{t},u^{t})\le 0,
\end{aligned}
\label{eq:ocp}
\end{equation}
}%
where $\mathcal{J}(\tau^{1:T})$ is the task objective and trajectory $\tau^{1:T}={(s^t,u^t)}_{t=1}^T$ with states and controls over time horizon $T$. $f(\cdot)$ denotes the discrete-time dynamics, and $g(\cdot)\le 0$ encodes constraints. Following MBD, we interpret reverse diffusion as a solver for~\eqref{eq:ocp}, in which noisy trajectory samples are progressively refined toward low-cost solutions using dynamic models. Our contributions can be summarized as two-fold: 
\textbf{C1:} We develop a constraint-optimization framework for model-based diffusion that integrates an inexact Augmented Lagrangian Method (iALM) soft feasibility prior with a Convex Feasible Set (CFS)-based hard projection operator, thereby preserving informative diffusion guidance and capturing multimodality while certifying safety in highly non-convex constrained environments.
\textbf{C2:} We further propose \textbf{MD-COAS}, in which safety enforcement is adaptively scheduled and co-optimized along with the diffusion scheduling, in a primal--dual view, to account for diffusion annealing and adaptively regulate constraint enforcement under constrained environments.

\section{Related Works}
\vspace{-0.1cm}
\subsection{Diffusion-Based Motion Planning}
 \vspace{-0.1cm}
Existing works on diffusion models for motion planning aim to obtain a clean trajectory by learning the score from demonstration data and then sampling from a posterior over trajectory distributions. Motion Planning Diffusion (MPD)~\cite{carvalho2023motion} formulates trajectory generation as a Denoising Diffusion Probabilistic Models (DDPMs) denoising process~\cite{ho2020denoising, chi2025diffusion,janner2022planning} for recovering a clean trajectory, without explicitly modeling constraints. Building on those methods, more works develop penalty terms, or augmented Lagrangian techniques for soft constraint enforcement~\cite{zhang2025constrained,liang2024multi}, and correct constraint violation by implementing explicit feasibility projection at each denoising step~\cite{romer2024diffusion,christopher2024constrained,xiao2023safediffuser,liang2025simultaneous}. While such methods improve safety, they still rely on large-scale demonstrations and do not rigorously enforce dynamics.

Model-Based Diffusion (MBD)~\cite{pan2024model} refines trajectories through model rollouts to enforce dynamics, driving the denoised trajectories toward low-cost solutions. More recent works ensure safety through an Emerging Barrier (EB), a Control Lyapunov Barrier Function (CLBF), or Control Barrier Function (CBF)-based projections inside the model-based diffusion rollouts~\cite{mishra2025eb,fishman2023diffusion,cheng2025safe, he2026}. However, those methods either lack a unified framework that combines soft feasibility priors with hard post-projection operators or neglect annealing of the diffusion denoising process by keeping safety enforcement fixed throughout the denoising process.
\vspace{-0.2cm}
\subsection{Safe Planning via Constraint Optimization}
\vspace{-0.1cm}
Constraint optimization has long been a principled approach for safe motion planning, often relying on Quadratic Programming (QP)-based trajectory optimization, formulating it as a sequence of convex subproblems via Sequential convexification~\cite{schulman2014motion,bonalli2019gusto} often implemented in a Sequential Quadratic Programming (SQP) framework~\cite{betts2010practical}. Convex Feasible Set (CFS) methods and related QP-based formulations~\cite{liu2018convex} further improve computational efficiency and widely adapt to real-time planning and Model Predictive Control (MPC)~\cite{zhou2021distributed}. Safety filtering frameworks such as Control Barrier Function (CBF)-based QPs~\cite{ames2016control} provide certified safety by enforcing barrier constraints through a QP. While these deterministic approaches offer feasibility guarantees, they are typically local, sensitive to initialization, fixed hyperparameters, and can converge to suboptimal solutions in highly non-convex environments. Primal--dual methods~\cite{wright1997primal} regulate feasibility in constrained optimization via adaptive updates of multipliers and penalty parameters, spanning augmented Lagrangian methods~\cite{sahin2019inexact,howell2019altro,jallet2022constrained} and Alternating
Direction Method of Multipliers (ADMM)-style residual-driven penalty adaptation~\cite{wohlberg2017admm,xu2017admm}. However, none of these methods have been applied to diffusion-based planning to adaptively schedule safety-enforcement parameters throughout the denoising process.

\textbf{Summary.} We are the first to propose a constraint optimization framework that treats the inexact Augmented Lagrangian Method (iALM) as a soft diffusion prior, unifies it with a Convex Feasible Set (CFS)-based hard projection to enforce safety within a stochastic model-based diffusion, and adopts a primal--dual adaptive scheduling strategy that co-optimizes safety enforcement with the diffusion schedule.

\vspace{-0.1cm}
\section{Preliminaries}
\vspace{-0.1cm}

In this section, we cover the relevant background. In what follows, we use a right upper index $t$ for time steps, a right lower index $k$ for reverse diffusion steps, subscript $m$ for the $m$-th Monte Carlo sample, and subscript $j$ for obstacle indices (e.g., $\tau_{k,m}^t$: the trajectory at time $t$ of the $m$-th sample at reverse step $k$, and $\phi_j$: the signed-distance function of obstacle $j$).
\label{sec:preliminaries}

\vspace{-0.1cm}
\subsection{Model-Based Diffusion for Trajectory Sampling}

\label{subsec:mbd}
We treat $\tau^{1:T}$ as a random variable and generate samples by iteratively denoising from Gaussian noise. Let $\{\tau^{1:T}_k\}_{k=1}^{K}$ denote the diffusion variables, where $\tau^{1:T}_1$ is a clean trajectory and $\tau^{1:T}_K$ is approximately pure noise $\mathcal N(0,\mathbf I)$. For the rest of the work, we omit the time index $1:T$ for brevity.

Following DDPMs, the forward process admits the closed-form reparameterization
\begin{math}
\tau_{k+1} = \sqrt{\bar\alpha_k}\,\tau_k + \sqrt{1-\bar\alpha_k}\,\epsilon,
\label{eq:ddpm_reparam}
\end{math}
where $\epsilon\sim\mathcal N(0,\mathbf I)$ with $\bar\alpha_k:=\prod_{j=1}^{k}\alpha_j$ and $\alpha_k\in(0,1]$. 
Let $p_k(\tau_k)$ denote the noisy marginal induced by the forward kernel $q$
\begin{equation}
p_k(\tau_k) := \int q(\tau_k\mid\tau_1)\,p_1(\tau_1)\,d\tau_1.
\label{eq:pk_def}
\end{equation}
A standard result in score-based diffusion is that the optimal reverse-time drift depends on the score
\begin{math}
\mathbf S_k(\tau_k)
=
\nabla_{\tau_k} \log p_k(\tau_k),
\label{eq:score_def}
\end{math}
yielding a reverse update form
{%
\setlength{\abovedisplayskip}{2.5pt}
\setlength{\belowdisplayskip}{2.5pt}
\setlength{\abovedisplayshortskip}{1.5pt}
\setlength{\belowdisplayshortskip}{1.5pt}
\begin{equation}
\tau_{k-1}
=
\frac{1}{\sqrt{\alpha_k}}
\left(
\tau_k + (1-\bar\alpha_k)\,\hat{\mathbf S}_k
\right),
\label{eq:reverse_update_prelim}
\end{equation}
}%
where $\hat{\mathbf S}_k\approx \mathbf S_k$ is an estimator of the score.

In DDPMs, $\hat{\mathbf S}_k$ is computed via a trained noise prediction network. In contrast, Model-Based Diffusion (MBD)~\cite{pan2024model} samples control sequences $u^{1:T-1}$, propagating them through known dynamics to obtain nominal trajectories $\tilde \tau^{1:T}$, and analytically estimating $\hat{\mathbf{S}}_k$ from the resulting denoised rollouts. Concretely, at reverse step $k$, we sample denoised candidates from the Gaussian proposal induced by the forward kernel
\begin{equation}
\tilde{\tau} \sim \Omega_k
:= \mathcal{N}\!\left(
\frac{\tau_k}{\sqrt{\bar{\alpha}_k}},
\left(\frac{1}{\bar{\alpha}_k} - 1\right)\mathbf{I}
\right),
\label{eq:proposal_prelim}
\end{equation}
and form a batch sample $\mathcal{B}_k=\{\tilde{\tau}_{k,m}\}_{m=1}^{M}$. Each candidate is evaluated under the target density $p_1(\cdot)$, yielding the importance-weighted average

\begin{equation}
\bar{\tau}_k
=
\frac{\sum_{m=1}^{M} p_1(\tilde{\tau}_{k,m})\,\tilde{\tau}_{k,m}}
{\sum_{m=1}^{M} p_1(\tilde{\tau}_{k,m})},
\label{eq:mc_avg_prelim}
\end{equation}
which produces the Monte Carlo Score Ascent (MCSA) estimator
\begin{equation}
\hat{\mathbf{S}}_k
\approx
-\frac{\tau_k}{1-\bar{\alpha}_k}
+\frac{\sqrt{\bar{\alpha}_k}}{1-\bar{\alpha}_k}\,
\bar{\tau}_k.
\label{eq:mc_score_prelim}
\end{equation}
Together, \eqref{eq:reverse_update_prelim}--\eqref{eq:mc_score_prelim} define a training-free, guided reverse diffusion process that steers samples toward regions of higher likelihood under $p_1$. The critical step is therefore to construct $p_1(\tau)$ such that its high-density regions coincide with low-cost and dynamically feasible trajectories.

\vspace{-0.2cm}
\subsection{Inexact Augmented Lagrangians as Feasibility Priors}
\label{subsec:pai}
\vspace{-0.1cm}
A common way to soften deterministic optimal control is to introduce a Boltzmann distribution\cite{peters2010relative,levine2018reinforcement} over trajectories:
\begin{math}
p_J(\tau)
\propto
\exp\!\left(-{\mathcal{J}(\tau)}/{\beta}\right),
\label{eq:boltzmann_cost_prelim}
\end{math}
where $\beta>0$ denotes a soft-optimality temperature. Dynamic feasibility and safety can be encoded by a feasibility prior
\begin{math}
p_{\mathcal{F}}(\tau)
\propto\
p_\mathcal{D}(\tau)\,p_\mathcal{G}(\tau),
\label{eq:pfeas_def}
\end{math}
where a hard (indicator) form is
{%
\setlength{\abovedisplayskip}{2.5pt}
\setlength{\belowdisplayskip}{2.5pt}
\setlength{\abovedisplayshortskip}{1.5pt}
\setlength{\belowdisplayshortskip}{1.5pt}
\begin{subequations}
\begin{align}
p_\mathcal{D}(\tau)
&:= \prod_{t=1}^{T-1}\mathbf{1}\!\left(s^{t+1}=f(s^{t},u^{t})\right),\\
p_\mathcal{G}(\tau)
&:= \prod_{t=1}^{T}\mathbf{1}\!\left(g^{t}(s^{t},u^{t})\le 0\right).
\end{align}
\label{eq:target_dist_hard}
\end{subequations}
}%
The resulting target distribution is
\begin{math}
p_1(\tau)
\propto\
p_J(\tau)\,
p_{\mathcal{F}}(\tau),
\end{math}
whose support is restricted to dynamically feasible and collision-free trajectories. However, employing the hard-indicator prior in \eqref{eq:target_dist_hard} leads to a highly sparse weighting landscape, where infeasible trajectories do not contribute signals to the Monte Carlo update. As a result, the sampler lacks meaningful gradient or score information and becomes statistically inefficient, a phenomenon also observed in EB-MBD~\cite{mishra2025eb}.

To retain the inference formulation while avoiding weight collapse, we replace the hard feasibility prior by a soft feasibility kernel $p_\mathcal{S}(\tau)$ that assigns continuous, nonzero weights to mildly infeasible trajectories. Concretely, we parameterize the kernel using an inexact Augmented Lagrangian Method (iALM)~\cite{sahin2019inexact} form: we first aggregate per-step inequality violations via a horizon-normalized quantity
\begin{math}
\bar g(\tau)
=
\frac{1}{T}\sum_{t=1}^{T} [g^{t}(s^{t},u^{t})]_+,
\label{eq:gbar_def}
\end{math}
and define the softened objective
{%
\setlength{\abovedisplayskip}{2.5pt}
\setlength{\belowdisplayskip}{2.5pt}
\setlength{\abovedisplayshortskip}{1.5pt}
\setlength{\belowdisplayshortskip}{1.5pt}
\begin{equation}
\tilde{\mathcal J}(\tau)
=
\mathcal{J}(\tau)
+
\lambda^\top \bar g(\tau)
+
\frac{\rho}{2}\,\|\bar g(\tau)\|^2,
\label{eq:alm_objective}
\end{equation}
}%
where $\lambda\!\ge\!0$ and $\rho\!>\!0$ are dual and penalty parameters. This yields the target distribution
{%
\setlength{\abovedisplayskip}{2.5pt}
\setlength{\belowdisplayskip}{2.5pt}
\setlength{\abovedisplayshortskip}{1.5pt}
\setlength{\belowdisplayshortskip}{1.5pt}
\begin{equation}
p(\tau)
\;\propto\;
\exp\!\left(-{\tilde{\mathcal J}(\tau)}/{\beta}\right)
=
p_J(\tau)\cdot p_{\mathbf{S}}(\tau),
\label{eq:pk_alm}
\end{equation}
and thus an explicit soft feasibility kernel
\begin{equation}
p_{\mathcal{S}}(\tau)
\;\propto\;
\exp\!\left(
-({\lambda^\top \bar g(\tau) + \frac{\rho}{2}\|\bar g(\tau)\|^2 })/{\beta}
\right).
\label{eq:psoft_alm}
\end{equation}
}%
The iALM soft feasibility prior in
\eqref{eq:pk_alm}--\eqref{eq:psoft_alm} mitigates dead samples by assigning
continuous weights to mildly infeasible trajectories. However, \textbf{(i)} in highly
constrained nonconvex environments, purely soft guidance may still stall near
tight passages, and it does not provide strict feasibility; \textbf{(ii)} the static values of $\lambda$ and $\rho$ do not adapt to the diffusion denoise process.

\vspace{-0.2cm}
\section{Methodology}
\vspace{-0.2cm}

We present our Model-Based Diffusion via Constraint Optimization and Adaptive Scheduling (MD-COAS) which makes two contributions: (\textbf{1}) optimize on the soft prior through a Convex Feasible Set (CFS)~\cite{liu2018convex}-based hard projection to maintain safety, (\textbf{2}) develop adaptive constraint scheduling that co-optimizes on both soft prior and hard operator along with diffusion denoising process. 

\vspace{-0.1cm}
\subsection{Problem Statement}
\label{subsec:problem_formulation}
\vspace{-0.1cm}

\newtheorem{definition}{Definition}
\begin{definition}[Single-Robot Motion Planning with Constraints]
\label{def:srmp}
We consider a discrete-time system over the horizon $T$ with state $s^t\in\mathbb{R}^{n_s}$
and control $u^t\in\mathbb{R}^{n_u}$. The trajectory is
$\tau^{1:T}=\{(s^t,u^t)\}_{t=1}^{T}$ and evolves as
\begin{equation}
s^{t+1}=f(s^t,u^t),\quad t=1,\dots,T-1,
\label{eq:dynamics}
\end{equation}
where $f(\cdot)$ is known.
Let $q^t=\Pi_q(s^t)\in\mathbb{R}^d$ denote the workspace position extracted from $s^t$. Given $N_{\mathrm{obs}}$ convex obstacles, we define collision-avoidance constraints using
a signed-distance function $\phi_j:\mathbb{R}^d\to\mathbb{R}$ for each obstacle $j$
\begin{equation}
\phi_j(q^t)\ge 0,\quad \forall t=1,\dots,T,\ \forall j=1,\dots,N_{\mathrm{obs}},
\label{eq:collision_constraints}
\end{equation}
The safe set is
\begin{math}
\Gamma \triangleq \Big\{\{q^t\}_{t=1}^{T}: \phi_j(q^t)\ge 0, \forall t,\forall j\Big\}
\label{eq:safe_set_def}
\end{math}. The objective is to minimize $\mathcal{J}(\tau^{1:T})$ subject to
\eqref{eq:dynamics}--\eqref{eq:collision_constraints}.
\end{definition}

\subsection{Convex Feasible Set as Hard Projection Operator}
\label{subsec:hard_operator}

We introduce a hard feasibility operator $\mathcal{P}_{I,\mathcal{H}}(\cdot)$, parameterized by an inner-iteration budget $I$ and an active-set size $\mathcal{H}$, as a proximal correction step that enforces local safety by projecting nominal diffusion proposals onto a convex inner approximation of the safe set. Given an MBD nominal proposal $\tilde\tau_{k,m}$ at reverse diffusion step $k$ from \eqref{eq:proposal_prelim}, the hard-corrected trajectory is
\begin{math}
\tilde\tau_{k-\frac12,m} = \mathcal{P}_{I_k,\mathcal{H}_k}(\tilde\tau_{k,m}),
\label{eq:hard_operator_apply}
\end{math}
where $\mathcal{P}_{I_k,\mathcal{H}_k}(\cdot)$ is implemented by the Convex Feasible Set (CFS) method~\cite{liu2018convex}, which iteratively convexifies the collision-avoidance constraints and solves a structured QP over the full control sequence.

\begin{definition}[CFS-based Hard Projection $\mathcal{P}_{I_k,\mathcal{H}_k}$]
\label{def:cfs_projection}
Let $\tilde\tau_{k,m}=\{(s_{k,m}^t,u_{k,m}^t)\}_{t=1}^{T}$ be a nominal trajectory proposal and
$q_{k,m}^t$ be the workspace position extracted from $s_{k,m}^t$.
At a CFS inner iteration with the current nominal $u_{k,m}$, we compute a corrected control sequence
$u_{k,m}^\star=\{u_{k,m}^{t}\}_{t=1}^{T-1}$ by solving
\begin{equation}
\begin{aligned}
&  u_{k,m}^\star\in\arg\min_{u}\quad
 \frac{1}{2}\|u-u_{k,m}\|_{R}^2 \\
& \text{s.t.}\quad (a_{j,k,m}^t)^\top u \ge b_{j,k,m}^t,\ \forall (j,t)\in\mathcal{A}_{k,m},
\end{aligned}
\label{eq:cfs_qp_full}
\end{equation}
where we obtain $(a_{j,k,m}^t,b_{j,k,m}^t)$ by first-order rollout linearization around $u_{k,m}$. $R\succ0$ weights proximity to the nominal control sequence, and
$\mathcal{A}_{k,m}$ is an optional active set of obstacle--time pairs with $|\mathcal{A}_{k,m}|\le \mathcal{H}_k$, the top-$\mathcal{H}_k$ most violated pairs ranked by the magnitude of $\phi_j(q_{k,m}^t)$.
\end{definition}

\vspace{-0.2cm}
Around the current nominal point $q_{k,m}^t$, CFS constructs the convex inner approximation as a supporting half-space
\begin{equation}
\mathcal{F}_{j}^t(q_{k,m}^t)
=
\Big\{q:\phi_{j}(q_{k,m}^t)+\hat\nabla \phi_{j}(q_{k,m}^t)^\top (q-q_{k,m}^t)\ge 0
\Big\}.
\label{eq:cfs_halfspace}
\end{equation}
The convex feasible set used by CFS is then
\begin{equation}
\mathcal{F}(q_{k,m}) \triangleq \bigcap_{(j,t)\in\mathcal{A}_{k,m}} \mathcal{F}_j^t(q_{k,m}^t) \ \subset \ \Gamma.
\label{eq:cfs_feasible_set}
\end{equation}

To obtain linear inequalities in the decision variable $u$, we linearize the rollout map around $u_{k,m}$
\begin{equation}
q^t(u)\approx q_{k,m}^t + J_{q,k,m}^t\,(u-u_{k,m}),
\
J_{q,k,m}^t=\left.\frac{\partial q^t}{\partial u}\right|_{u_{k,m}}.
\label{eq:rollout_lin}
\end{equation}
Here $J_{q,k,m}^t$ is the sensitivity of $q^t$ with respect to the stacked control sequence, obtained by differentiating the rollout through dynamics \eqref{eq:dynamics}. Substituting \eqref{eq:rollout_lin} into the CFS constraint in \eqref{eq:cfs_qp_full} yields linear constraints
$(a_{j,k,m}^t)^\top u\ge b_{j,k,m}^t$ with
\begin{equation}
\begin{aligned}
& a_{j,k,m}^t=(J_{q,k,m}^t)^\top \hat\nabla \phi_j(q_{k,m}^t),
\\
& b_{j,k,m}^t=-\phi_j(q_{k,m}^t)+(a_{j,k,m}^t)^\top u_{k,m}.
\end{aligned}
\label{eq:cfs_linear_u}
\end{equation}
Stacking constraints over $(j,t)\in\mathcal{A}_{k,m}$ yields a convex QP with linear constraints $A u \ge b$, producing a minimal-perturbation correction that enforces $\mathcal{F}(q_{k,m})$. Therefore, CFS runs an inner loop for $I_k$ iterations: it forms $\mathcal{F}(q_{k,m})$ by intersecting the half-spaces in \eqref{eq:cfs_halfspace}--\eqref{eq:cfs_feasible_set}, then solves the QP in \eqref{eq:cfs_qp_full} to update $u$ and re-rollout, repeating until convergence or the iteration budget is reached without breaking the dynamics.

\begin{algorithm}[tb!]
\caption{Model-Based Diffusion via Constraint Optimization and Adaptive Scheduling (MD-COAS)}
\label{alg:ours_cfs_mbd}
\small
\begin{algorithmic}[1]
\STATE \textbf{Input:} ${\tau}_K^{1:T}\!\sim\!\mathcal{N}(0,I)$, $s^1$, $s^T$, dynamics $f(\cdot)$, constraints $g(\cdot)\!\le\!0$, diffusion schedule $\{\alpha_k,\bar\alpha_k\}_{k=1}^K$, candidates $M$,
$\lambda_{K}\!\ge\!0$, $\rho_{K}\!>\!0$, $\nu_{K}\!\ge\!0$, tolerance $\xi$, budget $B$
\STATE \textbf{Output:} Clean trajectory ${\tau}_1^{1:T}$

\FOR{$k = K$ \textbf{to} $1$}

    \STATE Sample $M$ nominal candidates $\{\tilde\tau_{k,m}\}_{m=1}^M$ from \eqref{eq:proposal_prelim}.

    \STATE Compute $\tilde r_k$ and $\tilde g(\cdot)$ on $\{\tilde\tau_{k,m}\}_{m=1}^M$.

    \STATE Update soft-prior parameters $\lambda_{k-1},\rho_{k-1}$ \eqref{eq:dual_update_nonmonotone}--\eqref{eq:rho_update_nonmonotone}.

    \STATE Set CFS schedule $(p_k,I_k,\mathcal H_k)$ using $(\tilde r_k,\nu_k)$  \eqref{eq:compute_budget}.
    \STATE Update $\nu_{k-1}$ using $c_k=c(p_k,I_k,\mathcal H_k)$ via \eqref{eq:compute_dual_update}.

    \STATE Sample Bernoulli mask $\delta_{k,m}\sim\textsc{Bernoulli}(p_k)$
    \STATE $\tilde\tau_{k-\frac12,m}\leftarrow 
    \delta_{k,m}\, \mathcal{P}_{I_k,\mathcal H_k}(\tilde\tau_{k,m})
    +(1-\delta_{k,m})\,\tilde\tau_{k,m}$

    \STATE $\tilde\tau_{k-\frac12,m}\leftarrow 
    \textsc{RolloutWithDynamics}(\tilde\tau_{k-\frac12,m}, s^1, f(\cdot))$

    \STATE Compute Monte Carlo mean $\bar{\tau}_{k-\frac12}$ via \eqref{eq:mc_avg_prelim}.
    \STATE Estimate Score $\hat{\mathbf S}_k$ by $\bar{\tau}_{k-\frac12}$ via \eqref{eq:mc_score_prelim}.

    \STATE Monte Carlo Score Ascent update via \eqref{eq:reverse_update_prelim}: \\
    ${\tau}_{k-1} \leftarrow \dfrac{1}{\sqrt{\alpha_k}}\Big({\tau}_{k} + (1-\bar\alpha_k)\,\hat{\mathbf S}_k\Big)$

\ENDFOR
\end{algorithmic}
\end{algorithm}
\subsection{Adaptive Scheduling in Reverse Diffusion Process}
\label{subsec:adaptive_constraint_scheduling}

Fixing constraint enforcement parameters ~\cite{zhang2025constrained,mishra2025eb,romer2024diffusion} ignores that the required correction varies due to diffusion annealing. We therefore introduce adaptive constraint scheduling mechanisms that (\textbf{1}) adaptively optimize the iALM soft prior across diffusion steps and (\textbf{2}) invoke the hard operator only when necessary under a compute budget in a primal--dual view.

At reverse diffusion step $k$, MD-COAS draws $M$ candidate trajectories $\{\tilde\tau_{k,m}\}_{m=1}^{M}$. We summarize feasibility by a robust batch statistic of constraint violation
\begin{equation}
r_k
=
\operatorname{Quantile}_{0.9}\!\Big(\big\{\|\bar g(\tilde\tau_{k,m})\|\big\}_{m=1}^{M}\Big),
\label{eq:primal_residual}
\end{equation}
and introduce a tolerance $\xi>0$ to avoid over-enforcing negligible violations by defining the violation vector
\begin{equation}
\tilde g(\tilde\tau_{k,m}) := [\,\bar g(\tilde\tau_{k,m}) - \xi\mathbf 1\,]_+,
\quad
\tilde r_k := [\,r_k - \xi\,]_+ .
\label{eq:deadzone_def}
\end{equation}
Intuitively, $\tilde r_k$ acts as a step-wise feasibility signal: it is zero when constraints are satisfied up to tolerance, and grows with persistent violations.

We generalize the iALM prior parameters to be diffusion-step dependent and
update them using damped projected ascent and stall-driven penalty adjustment
\begin{equation}
\lambda_{k-1}
=\Pi_{\mathbb{R}_+}\!\left((1-\eta_\lambda)\lambda_k+\rho_k\,\tilde r_k\right),
\ \eta_\lambda\in(0,1),
\label{eq:dual_update_nonmonotone}
\end{equation}
\begin{equation}
\rho_{k-1}=
\begin{cases}
\min(\gamma_\uparrow\rho_k,\rho_{\max}),
& \tilde r_k > \bar r,\\
\max(\rho_k/\gamma_\downarrow,\rho_{\min}),
& \tilde r_k < \underline r, \\
\rho_k, & \text{otherwise},
\end{cases}
\label{eq:rho_update_nonmonotone}
\end{equation}
where $0<\underline r<\bar r$ are residual thresholds for penalty hysteresis. These updates non-monotonically adapt the soft feasibility kernel during reverse diffusion based on the ${\tilde r_k}$ with a scaling factor $\gamma$: feasibility pressure increases when $\tilde r_k$ persists or worsens, and relaxes once $\tilde r_k$ falls below the tolerance, adjusting to the current diffusion step's violation.

\begin{figure*}[tb!]
  \centering
  % --- left label box (fixed width) ---
  \begin{minipage}[c]{0.005\textwidth}
    \centering
    \raisebox{0.5\height}{\rotatebox[origin=c]{90}{\textbf{MBD}}}
  \end{minipage}\hspace{0.005\textwidth}
  % --- row of subfigures ---
  \begin{subfigure}[c]{0.113\textwidth}
    \centering
    \includegraphics[width=\linewidth]{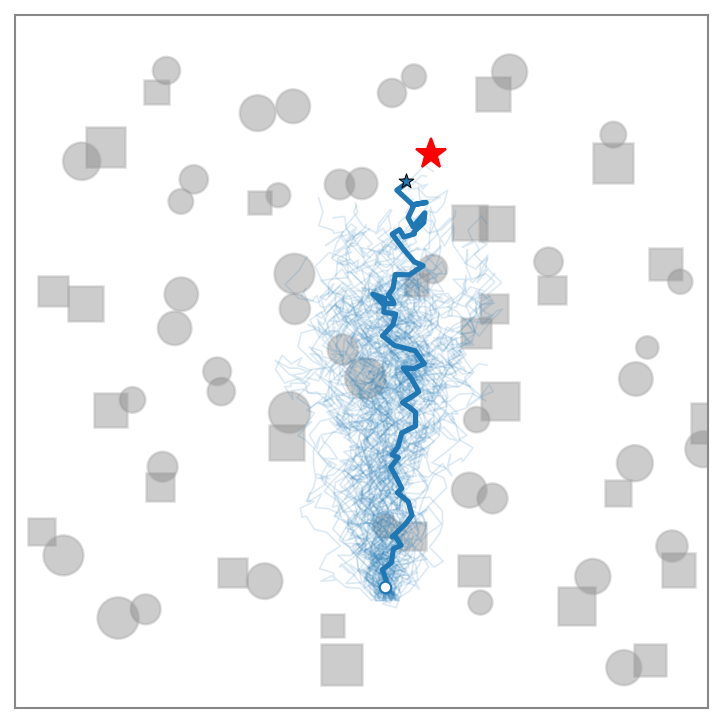}
  \end{subfigure}
  \begin{subfigure}[c]{0.113\textwidth}
    \centering
    \includegraphics[width=\linewidth]{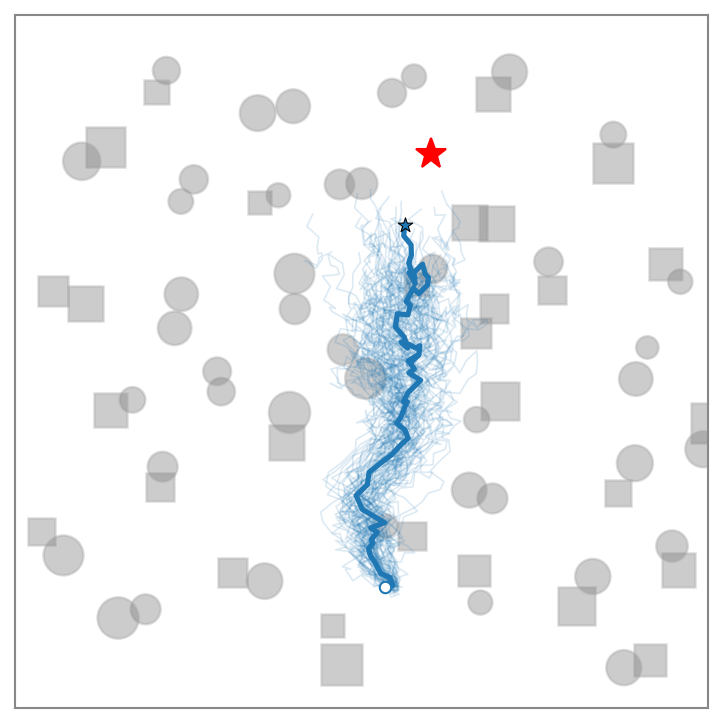}
  \end{subfigure}
  \begin{subfigure}[c]{0.113\textwidth}
    \centering
    \includegraphics[width=\linewidth]{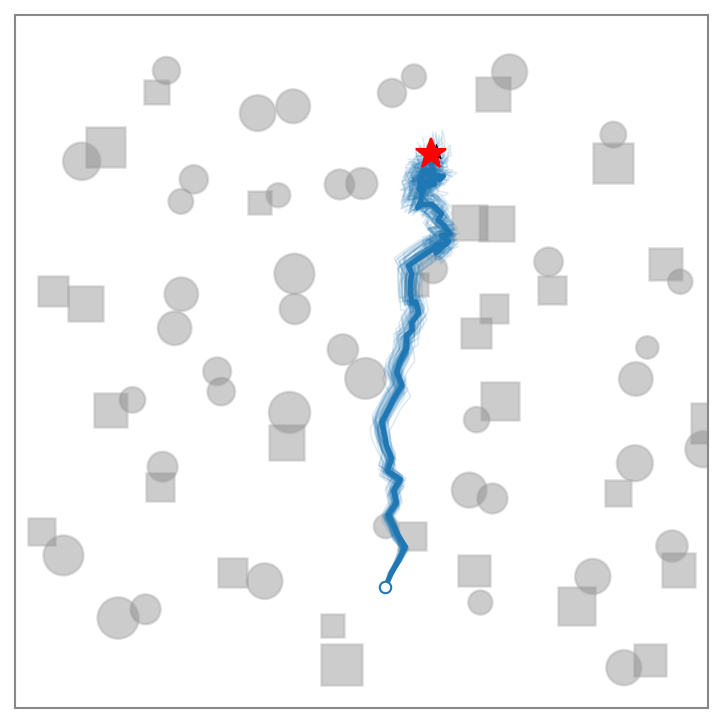}
  \end{subfigure}
  \begin{subfigure}[c]{0.113\textwidth}
    \centering
    \includegraphics[width=\linewidth]{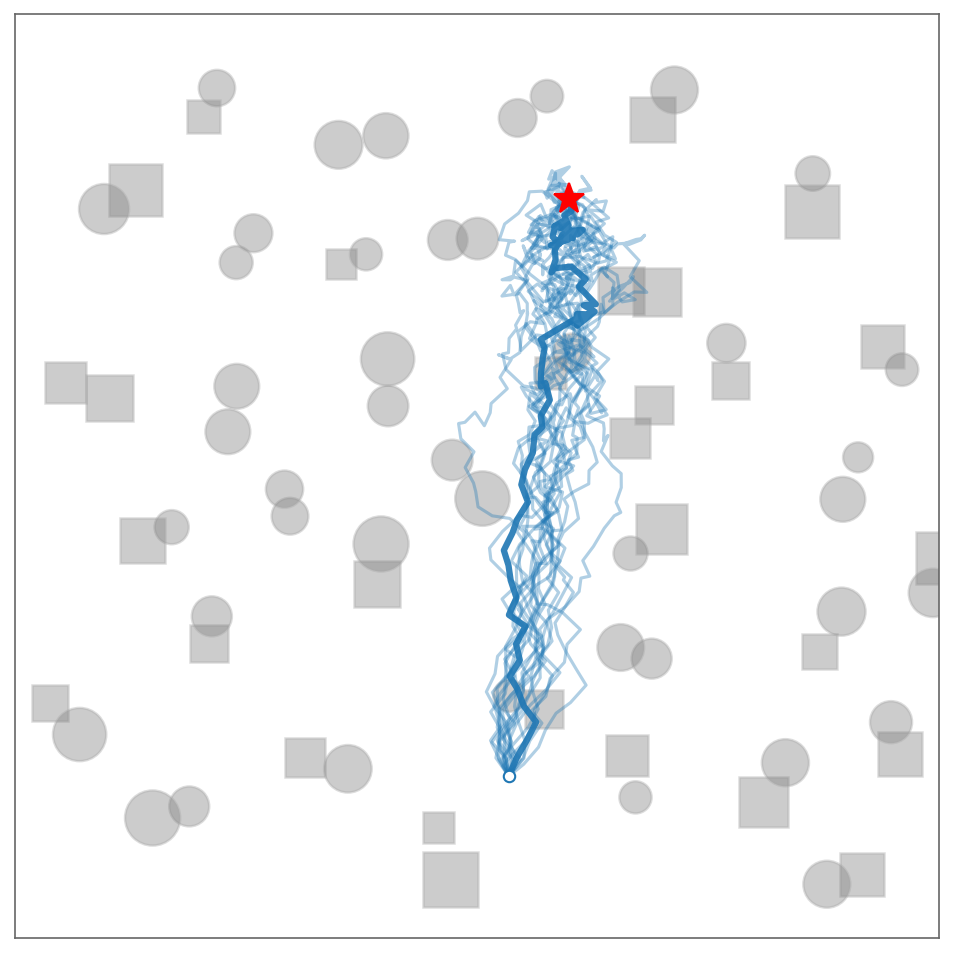}
  \end{subfigure}
  \begin{minipage}[c]{0.005\textwidth}
    \centering
    \raisebox{0.5\height}{\rotatebox[origin=c]{90}{\textbf{MD-COAS-A}}}
  \end{minipage}\hspace{0.005\textwidth}
  % --- row of subfigures ---
  \begin{subfigure}[c]{0.113\textwidth}
    \centering
    \includegraphics[width=\linewidth]{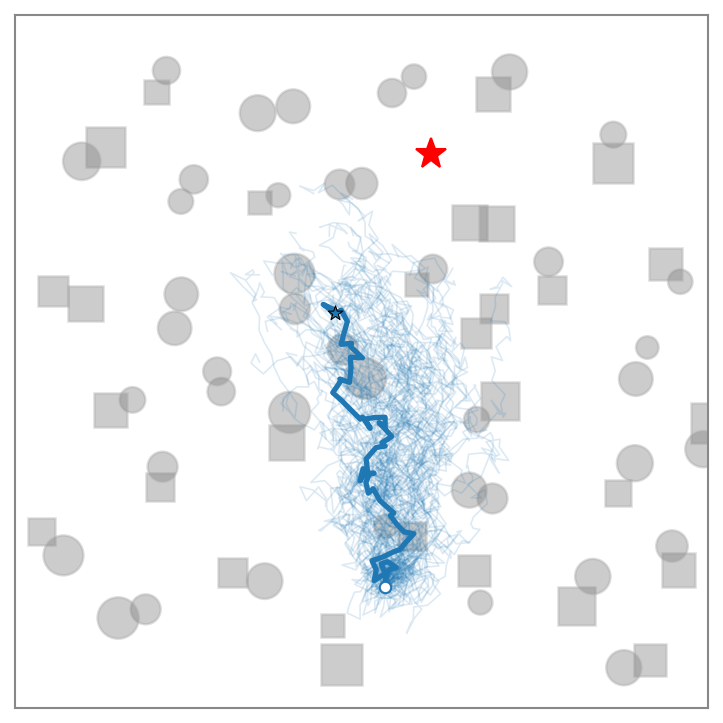}
  \end{subfigure}
  \begin{subfigure}[c]{0.113\textwidth}
    \centering
    \includegraphics[width=\linewidth]{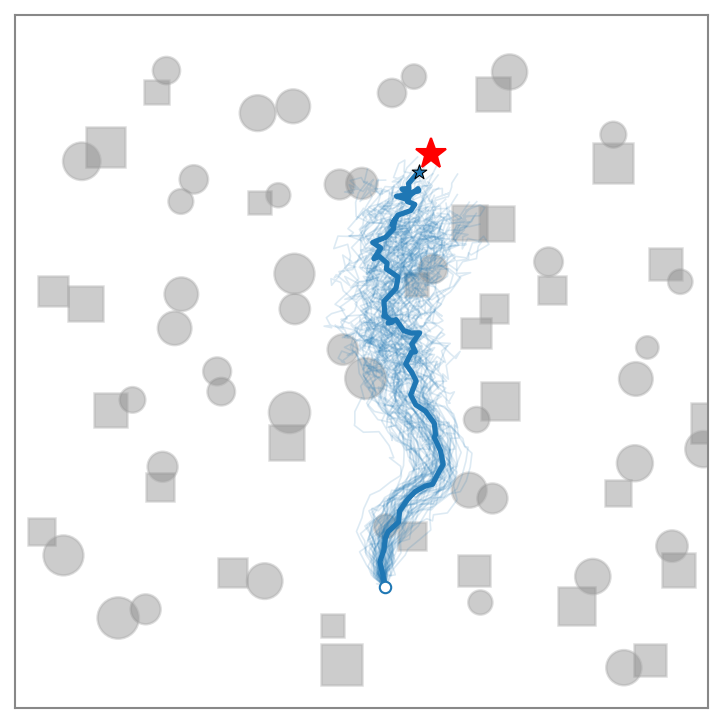}
  \end{subfigure}
  \begin{subfigure}[c]{0.113\textwidth}
    \centering
    \includegraphics[width=\linewidth]{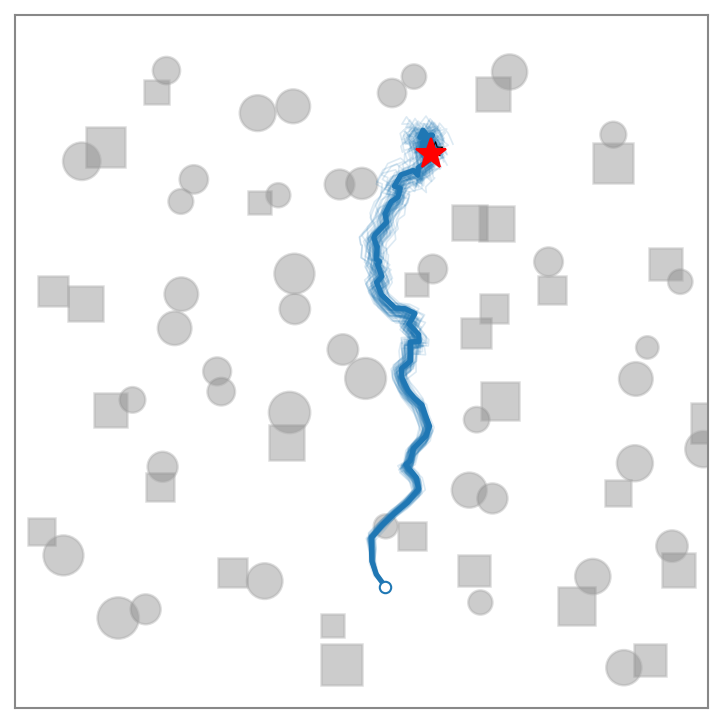}
  \end{subfigure}
  \begin{subfigure}[c]{0.113\textwidth}
    \centering
    \includegraphics[width=\linewidth]{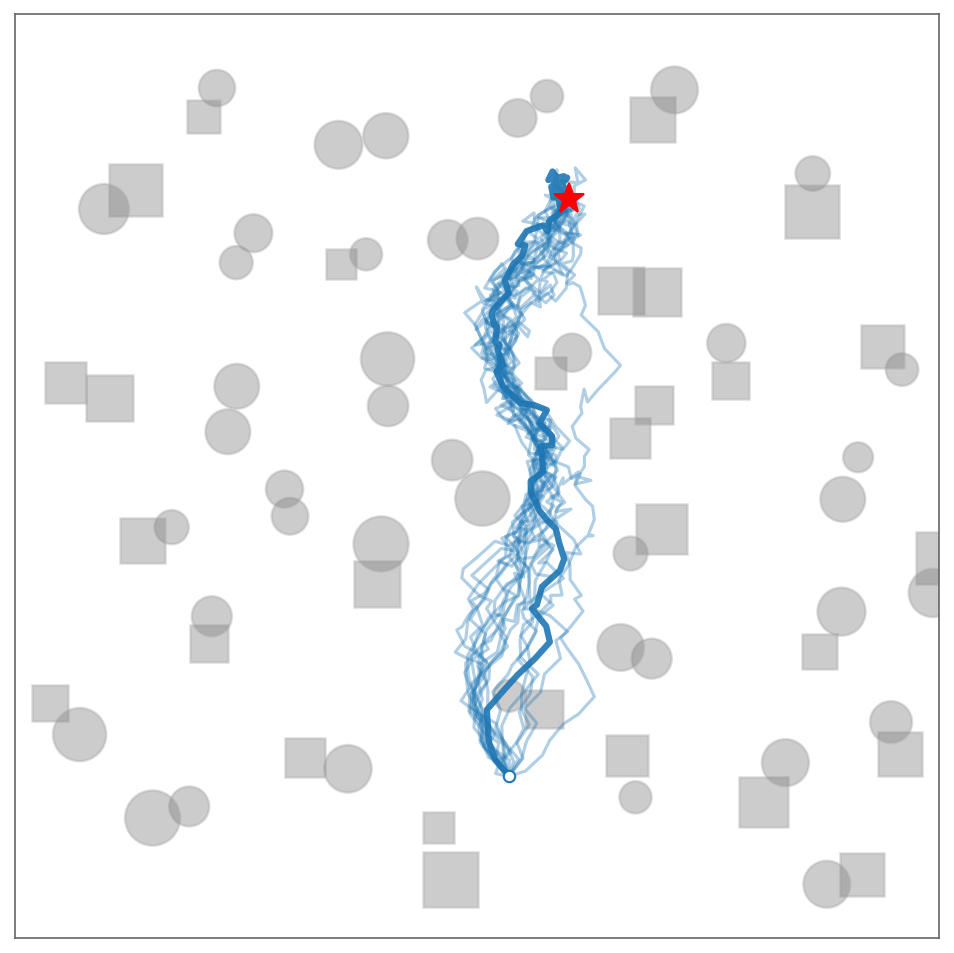}
  \end{subfigure}

  \par\vspace{5pt}

  \begin{minipage}[c]{0.005\textwidth}
    \centering
    \raisebox{0.5\height}{\rotatebox[origin=c]{90}{\textbf{EB-MBD}}}
  \end{minipage}\hspace{0.005\textwidth}
  % --- row of subfigures ---
  \begin{subfigure}[c]{0.113\textwidth}
    \centering
    \includegraphics[width=\linewidth]{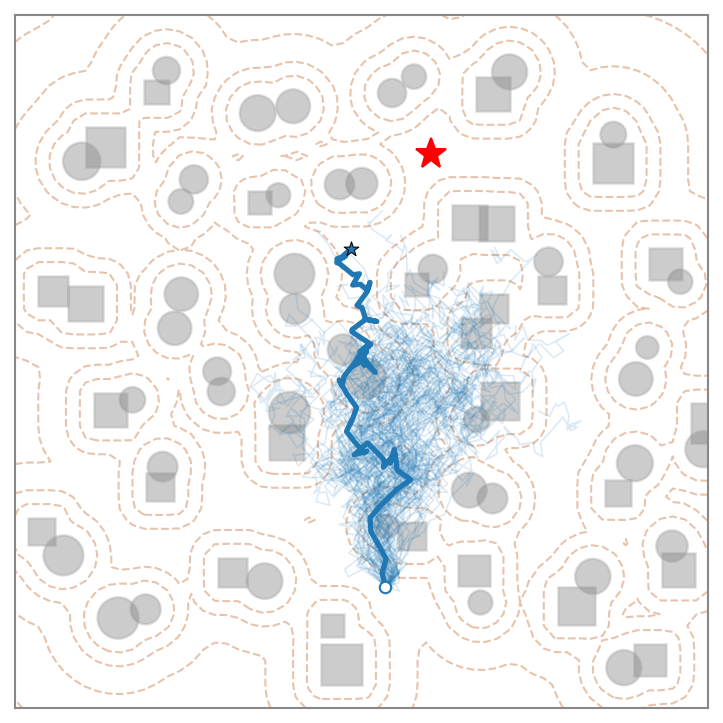}
  \end{subfigure}
  \begin{subfigure}[c]{0.113\textwidth}
    \centering
    \includegraphics[width=\linewidth]{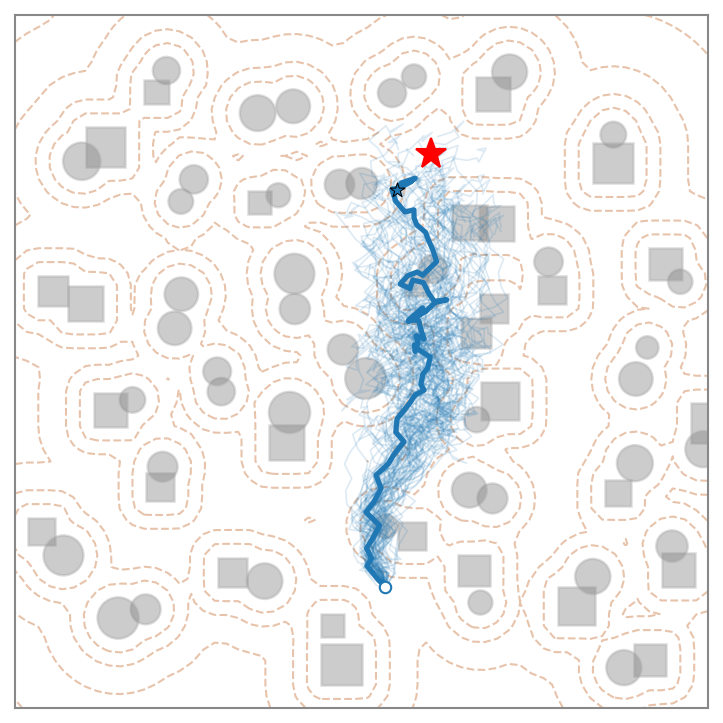}
  \end{subfigure}
  \begin{subfigure}[c]{0.113\textwidth}
    \centering
    \includegraphics[width=\linewidth]{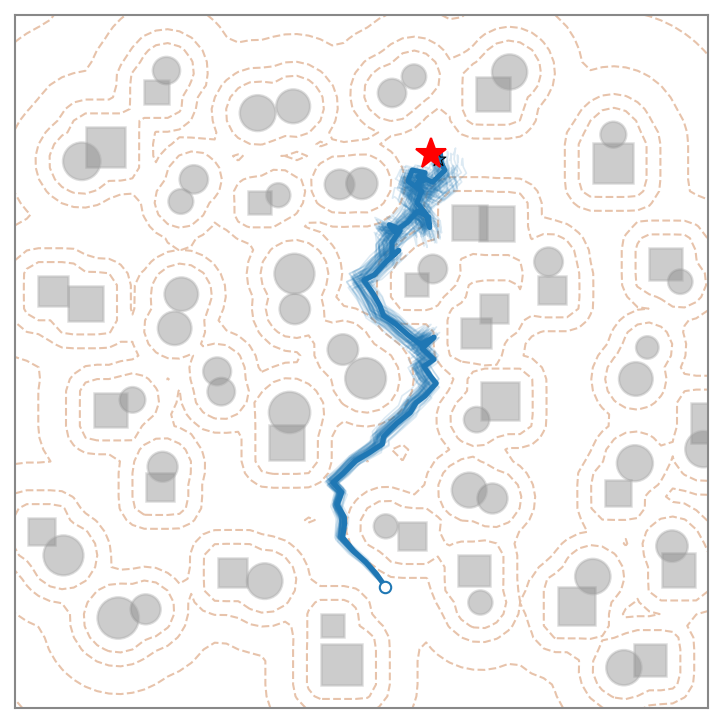}
  \end{subfigure}
  \begin{subfigure}[c]{0.113\textwidth}
    \centering
    \includegraphics[width=\linewidth]{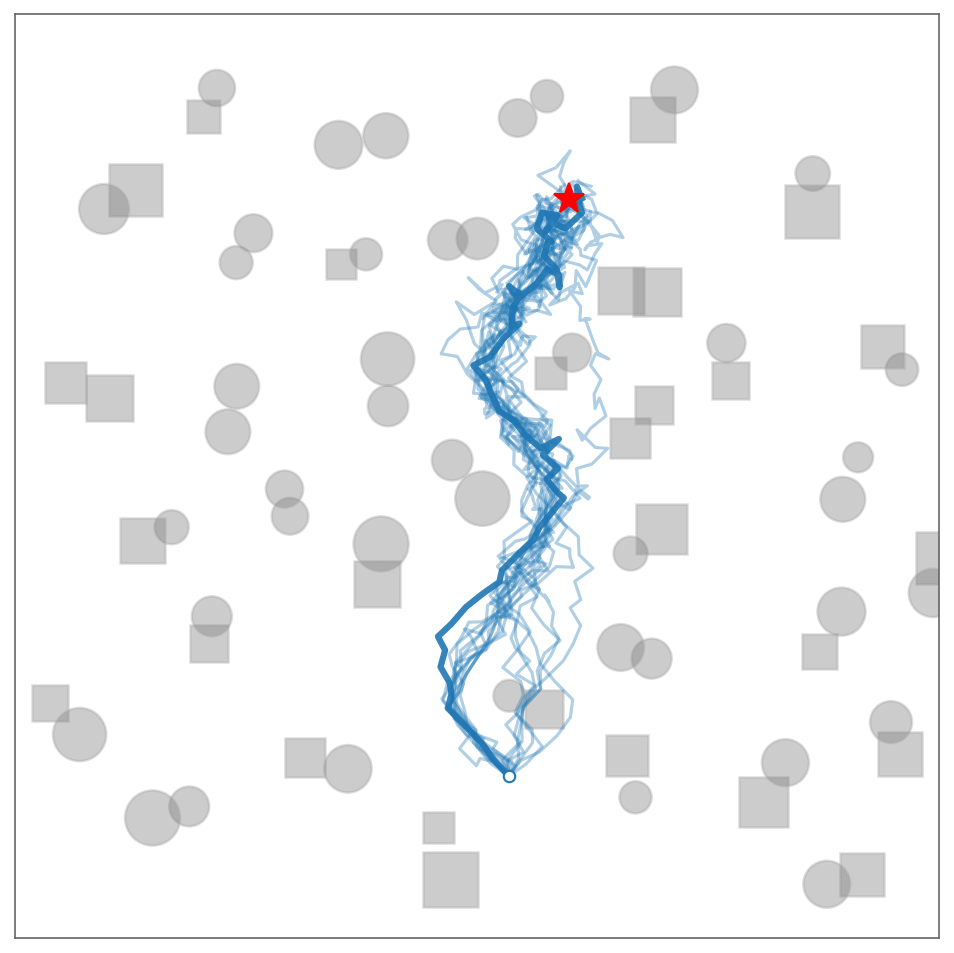}
  \end{subfigure}
  % \par\vspace{5pt}
  \begin{minipage}[c]{0.005\textwidth}
    \centering
    \raisebox{0.5\height}{\rotatebox[origin=c]{90}{\textbf{MD-COAS-F}}}
  \end{minipage}\hspace{0.005\textwidth}
  % --- row of subfigures ---
  \begin{subfigure}[c]{0.113\textwidth}
    \centering
    \includegraphics[width=\linewidth]{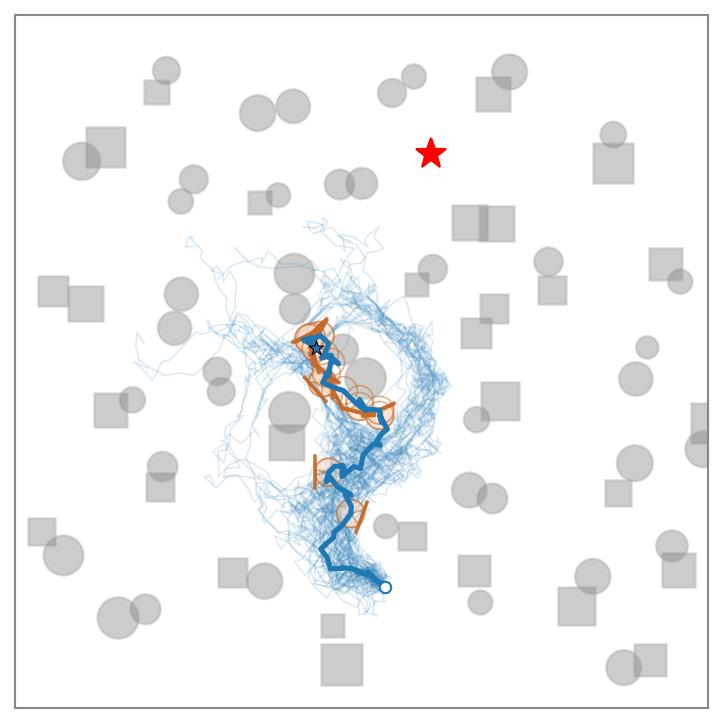}
  \end{subfigure}
  \begin{subfigure}[c]{0.113\textwidth}
    \centering
    \includegraphics[width=\linewidth]{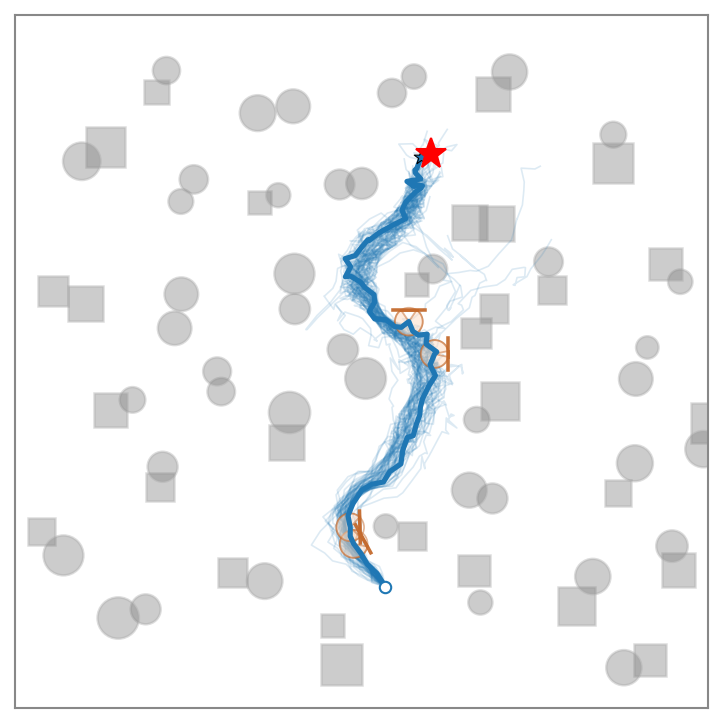}
  \end{subfigure}
  \begin{subfigure}[c]{0.113\textwidth}
    \centering
    \includegraphics[width=\linewidth]{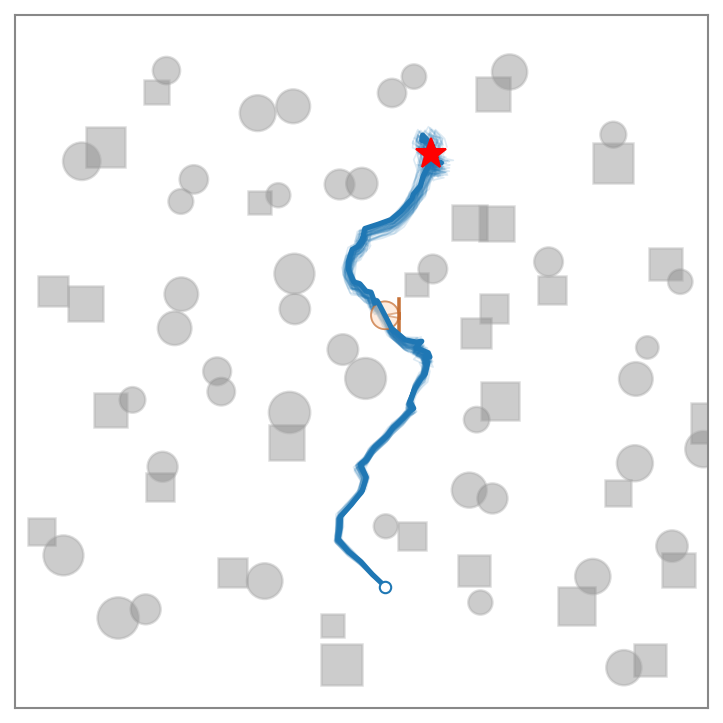}
  \end{subfigure}
  \begin{subfigure}[c]{0.113\textwidth}
    \centering
    \includegraphics[width=\linewidth]{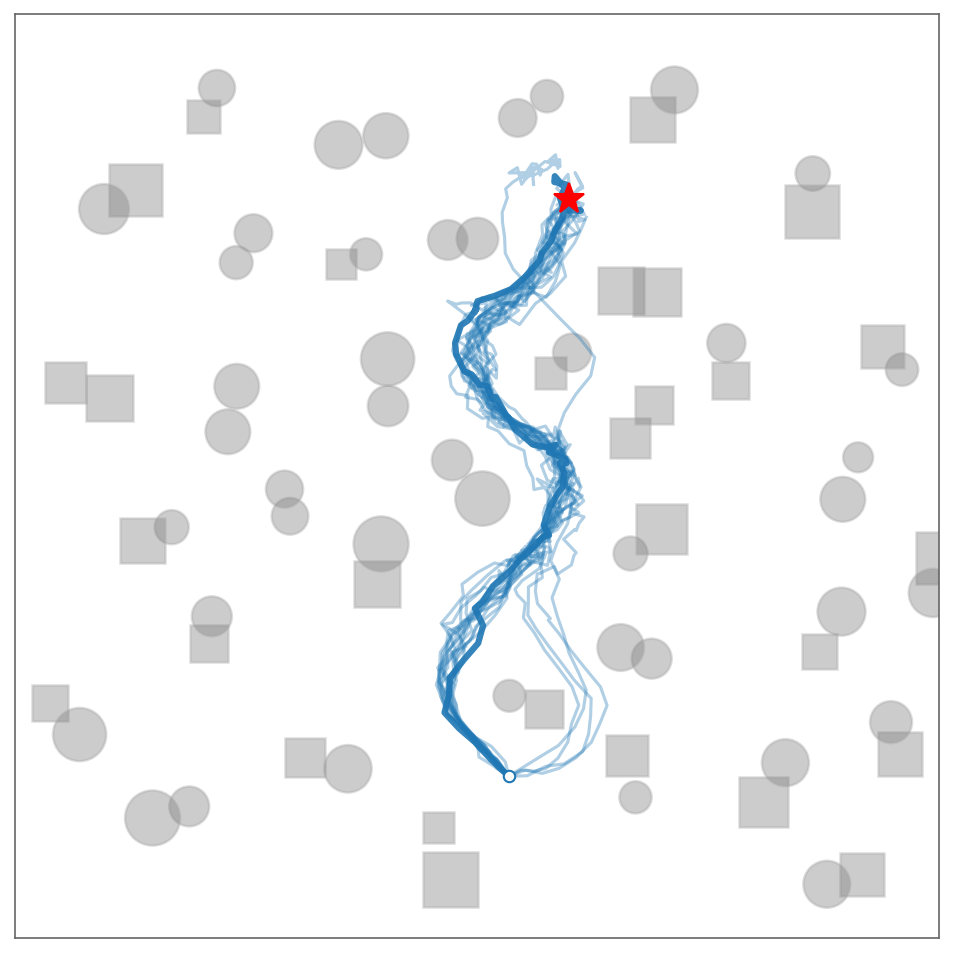}
  \end{subfigure}

  \par\vspace{5pt}

  \begin{minipage}[c]{0.005\textwidth}
    \centering
    \raisebox{0.5\height}{\rotatebox[origin=c]{90}{\textbf{MDOC}}}
  \end{minipage}\hspace{0.005\textwidth}
  % --- row of subfigures ---
  \begin{subfigure}[c]{0.113\textwidth}
    \centering
    \includegraphics[width=\linewidth]{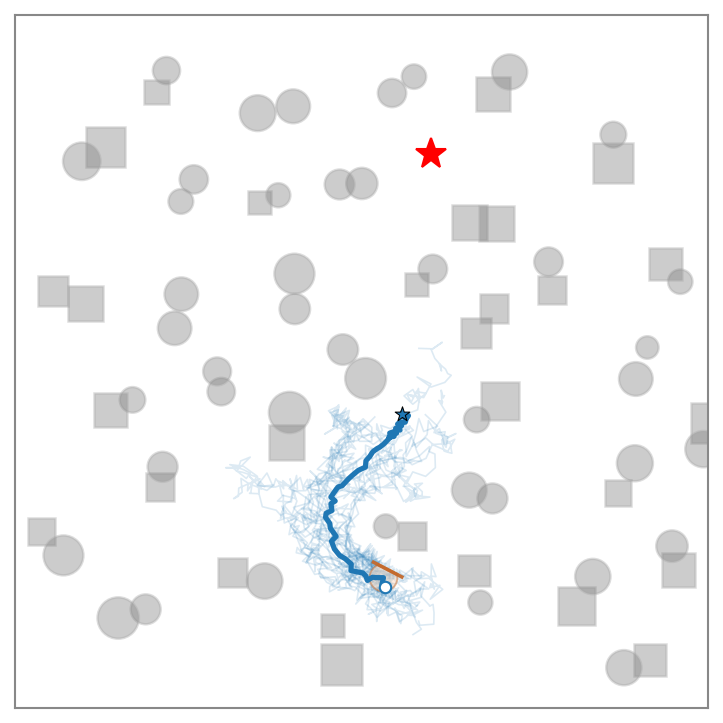}
  \end{subfigure}
  \begin{subfigure}[c]{0.113\textwidth}
    \centering
    \includegraphics[width=\linewidth]{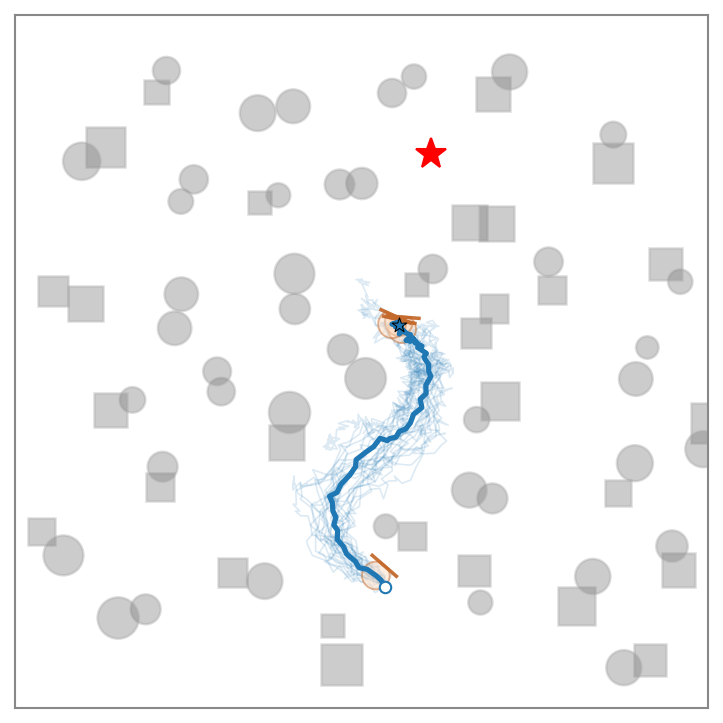}
  \end{subfigure}
  \begin{subfigure}[c]{0.113\textwidth}
    \centering
    \includegraphics[width=\linewidth]{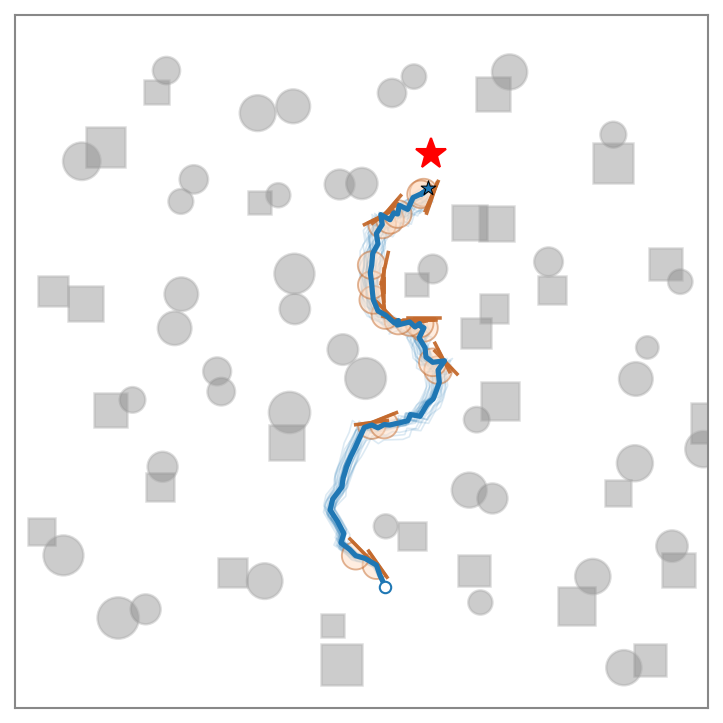}
  \end{subfigure}
  \begin{subfigure}[c]{0.113\textwidth}
    \centering
    \includegraphics[width=\linewidth]{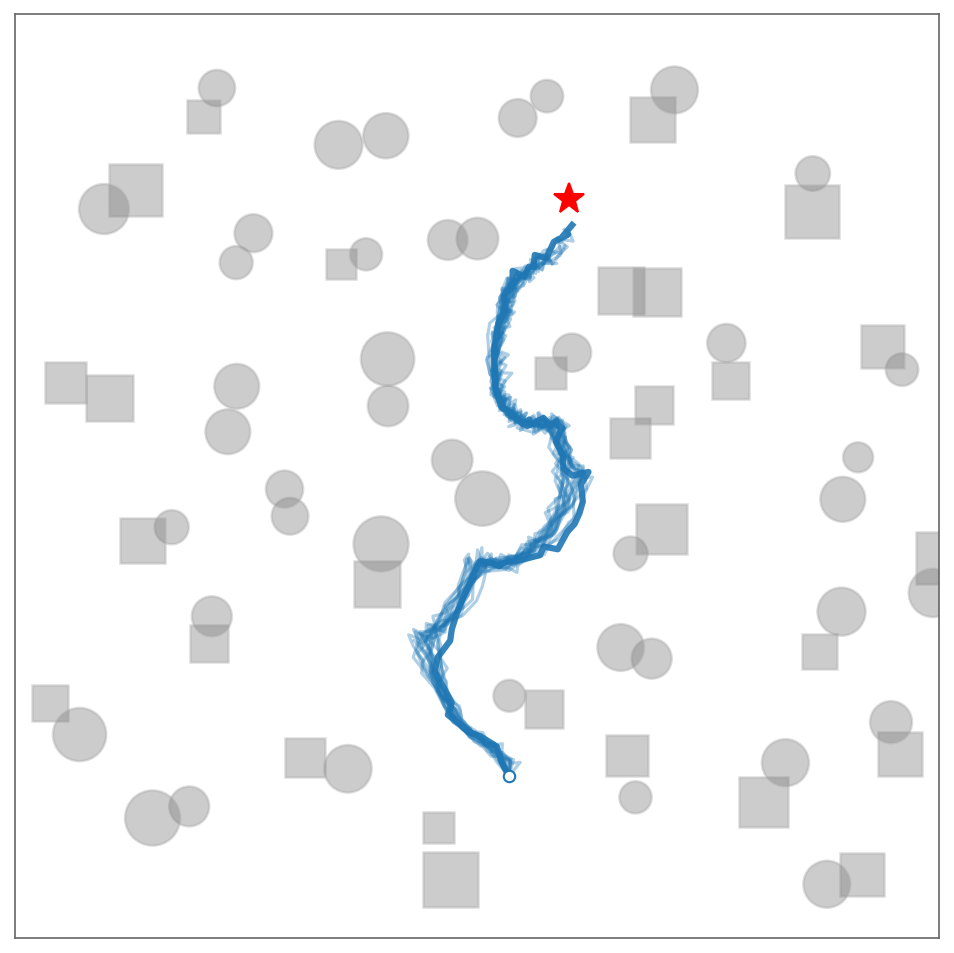}
  \end{subfigure}
  % \par\vspace{5pt}
  \begin{minipage}[c]{0.005\textwidth}
    \centering
    \raisebox{0.5\height}{\rotatebox[origin=c]{90}{\textbf{MD-COAS}}}
  \end{minipage}\hspace{0.005\textwidth}
  % --- row of subfigures ---
  \begin{subfigure}[c]{0.113\textwidth}
    \centering
    \includegraphics[width=\linewidth]{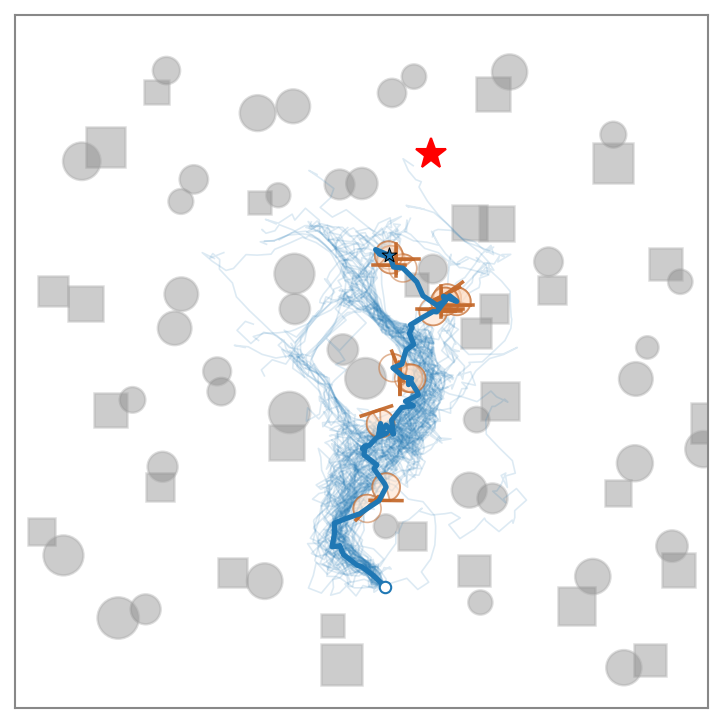}
  \end{subfigure}
  \begin{subfigure}[c]{0.113\textwidth}
    \centering
    \includegraphics[width=\linewidth]{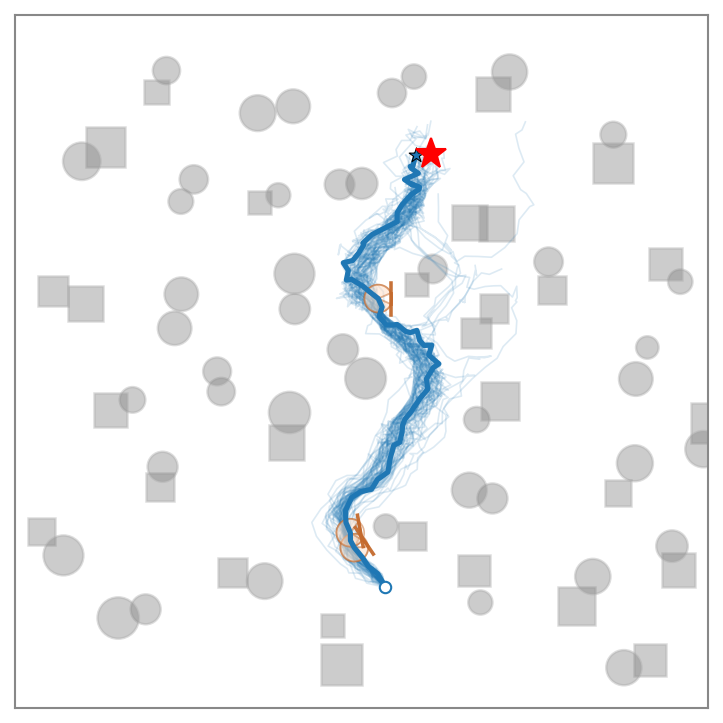}
  \end{subfigure}
  \begin{subfigure}[c]{0.113\textwidth}
    \centering
    \includegraphics[width=\linewidth]{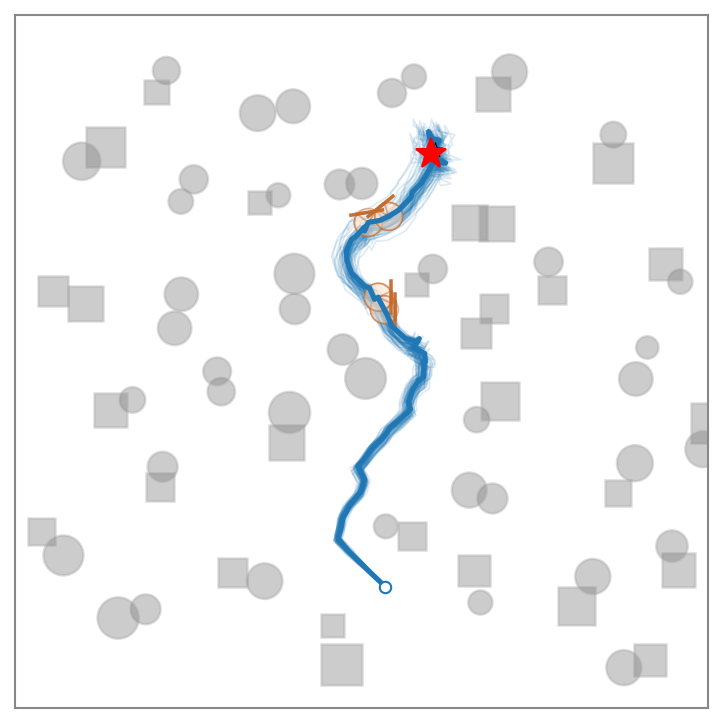}
  \end{subfigure}
  \begin{subfigure}[c]{0.113\textwidth}
    \centering
    \includegraphics[width=\linewidth]{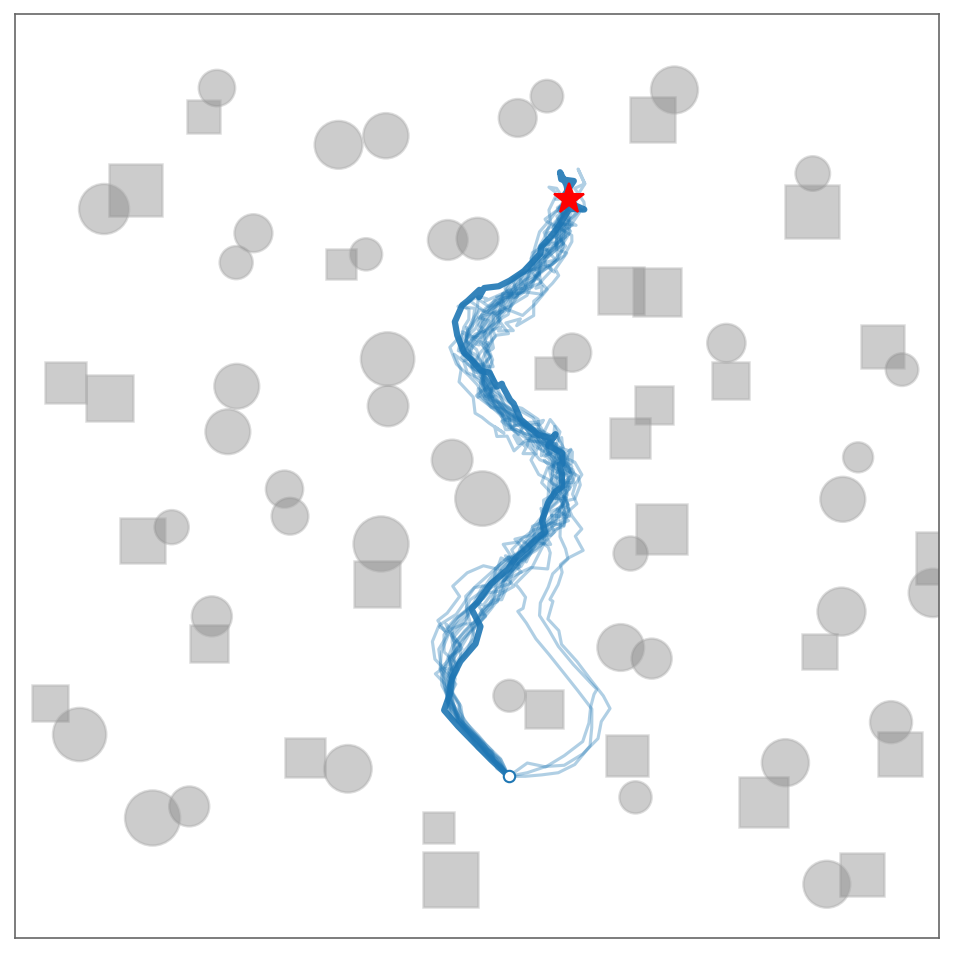}
  \end{subfigure}
  \caption{Illustrations of the diffusion process of model-based methods for 2D Constrained Obstacle environment at L10 (Union Map) and Seed 8. For each method, the left 3 columns denote rollouts at 90\%, 50\%, and 10\% of the denoising process (early → mid → late), respectively, and the rightmost column indicates the final clean trajectories of 20 candidates. We use blue color to represent (sample) trajectories; Orange circles represent static Emerging Barrier in EB-MBD, while hard QP correction margin and boundary in MDOC, MD-COAS-F, and MD-COAS.}
  \label{fig:diffusion_process}
  \vspace{-0.5cm}
\end{figure*}

Likewise, we treat the hard operator as a corrector that is invoked only when needed across diffusion steps. Given a nominal proposal $\tilde\tau_{k,m}$ at step $k$, we apply
\begin{equation}
\tilde\tau_{k-\frac12,m}
=
\begin{cases}
\mathcal P_{I_k, \mathcal{H}_k}(\tilde\tau_{k,m}), & \text{with probability } p_k,\\
\tilde\tau_{k,m}, & \text{otherwise},
\end{cases}
\label{eq:qp_gate}
\end{equation}
where $p_k\in[0,1]$ is the projection probability. We adapt $(p_k,\mathcal H_k,I_k)$ using a batch violation statistic $v_k$: we increase projection effort by raising $p_k$, $\mathcal H_k$, and $I_k$ as $v_k$ increases, and shrinking them as $v_k$ decreases. To bound the average computation, we constrain the expected per-diffusion-step effort by
\begin{math}
\frac{1}{K}\sum_{k=0}^{K-1}\mathbb{E}\!\left[c_k(p_k, I_k, \mathcal{H}_k)\right]\le B.
\label{eq:compute_budget}
\end{math}
Here $c_k$ denotes the expected projection computed at step $k$, which scales with both how often we project ($p_k, I_k$) and the per-QP size controlled by $\mathcal{H}_k$ (e.g., $c_k\approx p_k I_k \mathcal{H}_k$), and $B$ is a fixed budget.

We regulate this budget using a compute dual $\nu_k\ge 0$ with damped projected ascent
{%
\setlength{\abovedisplayskip}{2.5pt}
\setlength{\belowdisplayskip}{2.5pt}
\setlength{\abovedisplayshortskip}{1.5pt}
\setlength{\belowdisplayshortskip}{1.5pt}
\begin{equation}
\nu_{k-1}
=
\Pi_{\mathbb{R}_+}\!\left(\nu_k+\eta_\nu\,(c_k-B)\right),
\quad
\eta_\nu\in(0,1).
\label{eq:compute_dual_update}
\end{equation}
}%
When the realized effort $c_k$ exceeds the fixed budget $B$, $\nu_k$ increases, discouraging aggressive projection; when the effort remains below the budget, $\nu_k$ decreases, allowing more frequent or stronger corrections.

We couple feasibility and compute via an augmented Lagrangian surrogate at diffusion step $k$
\begin{equation}
\begin{aligned}
& \mathcal{L}_k(\tilde\tau_k,\lambda_k,\rho_k,\nu_k, p_k, I_k, \mathcal{H}_k) = \mathcal{J}(\tilde\tau_k) + \\
& \underbrace{\lambda_k\, \tilde r_k(\tilde\tau_k) + \frac{\rho_k}{2}\,\tilde r_k(\tilde\tau_k)^2}_{\text{Adaptive iALM Soft Diffusion Prior}} + \underbrace{\nu_k\big(c_k(p_k, I_k, \mathcal{H}_k)-B\big)}_{\text{Adaptive CFS-based Hard Projection}}
\label{eq:lagrangian_joint}
\end{aligned}
\end{equation}

Therefore, \eqref{eq:lagrangian_joint} integrates the iALM soft diffusion prior and the hard-operator schedule within a unified primal--dual view: the residual sequence $\{\tilde r_k\}$ adjusts $(\lambda_k,\rho_k)$ to modulate feasibility pressure across denoising, while the computed residual $c_k-B$ adjusts $\nu_k$ to regulate projection effort. As a result, MD-COAS co-optimizes safety enforcement and diffusion scheduling, enforcing constraints more aggressively only when violations persist and compute allows.

\subsection{Putting Everything Together: MD-COAS}
\vspace{-0.1cm}
We now combine (\textbf{1}) the MCSA sampler in Section~\ref{subsec:mbd}, (\textbf{2}) the iALM soft diffusion prior in Section~\ref{subsec:pai}, and (\textbf{3}) the CFS-based hard projection operator $\mathcal P_{I_k, \mathcal{H}_k}(\cdot)$ in Section~\ref{subsec:hard_operator}, with (\textbf{4}) adaptive scheduling in Section~\ref{subsec:adaptive_constraint_scheduling} to form our method.

As shown in Algorithm~\ref{alg:ours_cfs_mbd}, at each reverse diffusion step $k$, MD-COAS first draws $M$ nominal candidates from the Gaussian proposal in \eqref{eq:proposal_prelim}, evaluates the batch feasibility residual $\tilde r_k$ via \eqref{eq:primal_residual}--\eqref{eq:deadzone_def}, and updates the soft-prior parameters $(\lambda_{k-1},\rho_{k-1})$ using \eqref{eq:dual_update_nonmonotone}--\eqref{eq:rho_update_nonmonotone}. It then schedules the hard correction effort $(p_k,I_k,\mathcal H_k)$ under the compute dual $\nu_k$ using $c_k=c(p_k,I_k,\mathcal H_k)$ and through \eqref{eq:compute_dual_update}, and selectively applies the CFS projection with the gate in \eqref{eq:qp_gate}; otherwise, it keeps $\tilde\tau_{k-\frac12,m}=\tilde\tau_{k,m}$. Resulting candidates are rolled out through the known dynamics to obtain dynamically feasible trajectories, which are reweighted by the adaptive iALM target in \eqref{eq:pk_alm} to compute the Monte Carlo mean and score $\hat{\mathbf S}_k$ via \eqref{eq:mc_avg_prelim}--\eqref{eq:mc_score_prelim}, and the noisy trajectory is updated by the reverse step in \eqref{eq:reverse_update_prelim}.

\section{Experimental Analysis}
\vspace{-0.1cm}
We explore the performance of our method, MD-COAS, in SRMP by addressing the following questions:

\begin{itemize}
    \item What advantages does MD-COAS have over state-of-the-art (SOTA) model-based diffusion planners? Why are Constraints Optimization (CO) and Adaptive Scheduling (AS) important?
    \item Can MD-COAS maintain performance when scaling to a higher-dimensional 7-DoF robot arm avoidance task?
\end{itemize}

\textbf{Baselines.}
We compare our method, MD-COAS, with representative model-based and model-free diffusion motion planning baselines:
(\textbf{1}) MBD~\cite{pan2024model}, a vanilla model-based diffusion planner for trajectory optimization;
(\textbf{2}) EB-MBD~\cite{mishra2025eb}, a safety-enhanced variant of MBD that mitigates dead samples in highly constrained environments using an emerging barrier;
(\textbf{3}) MDOC~\cite{he2026}, which integrates Control Barrier Function (CBF)-based projection into each MBD rollout step;
(\textbf{4}) SafeDiffuser~\cite{xiao2023safediffuser}, a model-free diffusion planner that embeds CBF-based invariance into the denoising process by solving a per-step QP; (\textbf{5}) DPCC-C~\cite{romer2024diffusion}, a model-free diffusion predictive control baseline that enforces safety via constraint tightening and selects the trajectory with minimum cumulative projection cost. We further include \textbf{two ablation variants:}
(\textbf{i}) MD-COAS-A, which applies adaptive iALM-based soft priors without QP projection;
(\textbf{ii}) MD-COAS-F, which uses fixed constraint optimization parameters without adaptive scheduling.
All experiments are conducted on a workstation equipped with a 48GB NVIDIA RTX A6000 GPU and 9 CPU cores. We solve the CFS-based projection on full samples in our method through \texttt{jaxopt}.
\begin{figure}[tb!]
  \centering

  % % --- Top: SSR Heatmap ---
  % \begin{subfigure}[c]{0.49\textwidth}
  %   \centering
  %   \includegraphics[width=\linewidth]{figures/heatmap/ssr_heatmap.png}
  % \end{subfigure}

  % \par\vspace{5pt}

  % % --- Bottom row ---
  % \begin{subfigure}[c]{0.24\textwidth}
  %   \centering
  %   \includegraphics[width=\linewidth]{figures/heatmap/mean_cost_vs_level.png}
  % \end{subfigure}
  % \hfill
  % \begin{subfigure}[c]{0.24\textwidth}
  %   \centering
  %   \includegraphics[width=\linewidth]{figures/heatmap/all_levels_mean.png}
  % \end{subfigure}
  % \hfill
  % \begin{subfigure}[c]{0.24\textwidth}
  %   \centering
  %   \includegraphics[width=\linewidth]{figures/heatmap/time_per_level.png}
  % \end{subfigure}
  % \begin{subfigure}[c]{0.24\textwidth}
  %   \centering
  %   \includegraphics[width=0.7\linewidth]{figures/heatmap/constraint-legend.png}
  % \end{subfigure}

  % \includesvg[width=\linewidth]{figures/heatmap/ssr_heatmap_full}
    \includegraphics[width=\linewidth]{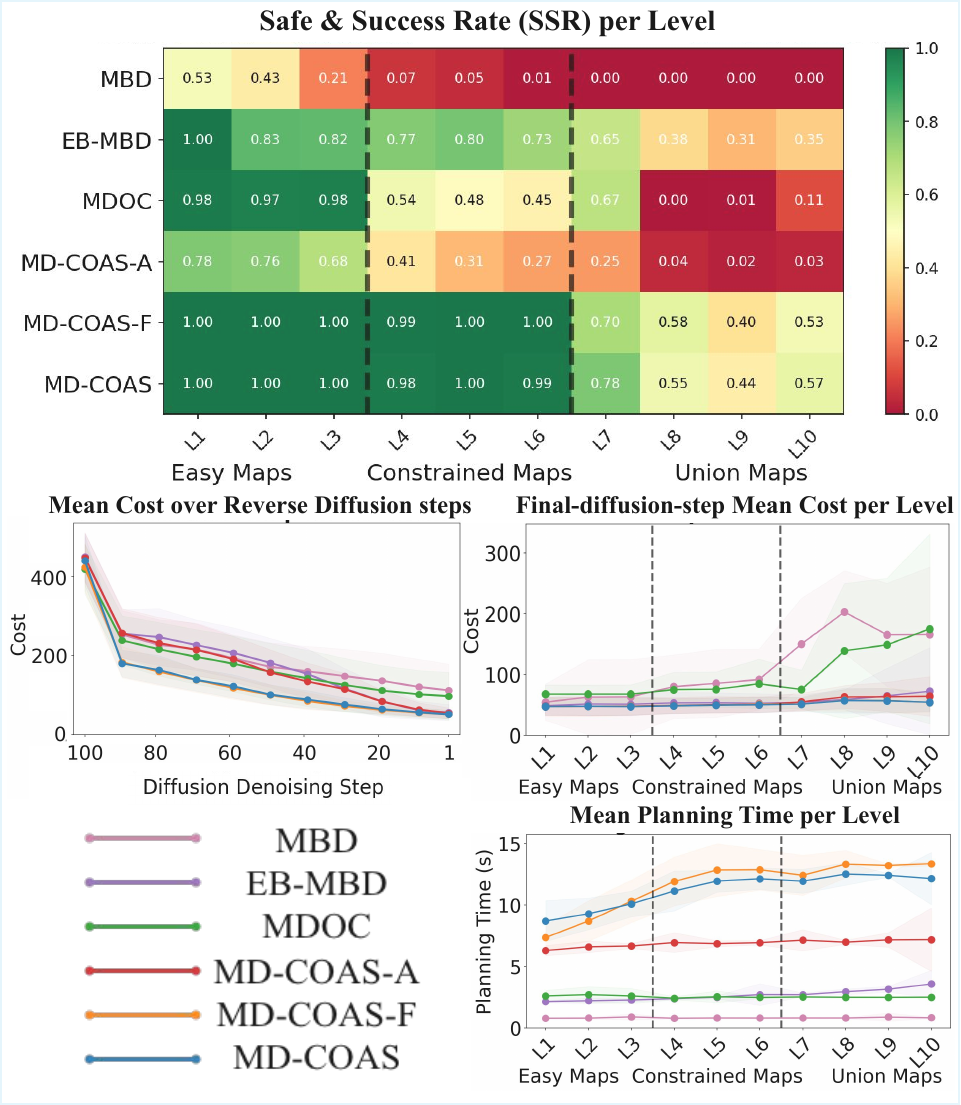}
  \caption{
  Overall performance comparison of model-based diffusion planners in the 2D constrained obstacle-avoidance benchmark.
  Top: Safe \& Success Rate (SSR) per Level  (avg. over seeds).
  Bottom: Mean Cost over Reverse Diffusion steps (avg. over levels and seeds); Final-diffusion-step Mean Cost per Level (avg. over seeds); Mean Planning Time per Level (avg. over seeds). Color regions indicate one standard deviation.
  }
  \label{fig:performance}
  \vspace{-0.5cm}
\end{figure}
\begin{figure}[tb!]
  \centering

  % % --- row of subfigures ---
  % \begin{subfigure}[t]{0.24\textwidth}
  % \textbf{Adaptive iALM Soft Prior}
  %   \centering
  %   \includegraphics[width=\linewidth]{figures/heatmap/all_levels_mean_r_k.png}
  % \end{subfigure}
  % \begin{subfigure}[t]{0.24\textwidth}
  % \textbf{Adaptive Hard QP Operator}
  %   \centering
  %   \includegraphics[width=\linewidth]{figures/heatmap/all_levels_mean_v_mean.png}
  % \end{subfigure}
  
  % \par\vspace{0pt}
  % \begin{subfigure}[t]{0.24\textwidth}
  %   \centering
  %   \includegraphics[width=\linewidth]{figures/heatmap/all_levels_mean_lambda.png}
  % \end{subfigure}
  % \begin{subfigure}[t]{0.24\textwidth}
  %   \centering
  %   \includegraphics[width=\linewidth]{figures/heatmap/all_levels_mean_p_k.png}
  % \end{subfigure}

  % \par\vspace{0pt}
  % \begin{subfigure}[t]{0.24\textwidth}
  %   \centering
  %   \includegraphics[width=\linewidth]{figures/heatmap/all_levels_mean_rho.png}
  % \end{subfigure}
  % \begin{subfigure}[t]{0.24\textwidth}
  % \centering
  %   \includegraphics[width=\linewidth]{figures/heatmap/all_levels_mean_I_QP.png}
  % \end{subfigure}

  % \par\vspace{0pt}
  % \begin{subfigure}[t]{0.24\textwidth}
  %   \centering
  %   \includegraphics[width=0.7\linewidth]{figures/heatmap/constraint-legend2.png}
  % \end{subfigure}
  % \begin{subfigure}[t]{0.24\textwidth}
  %   \centering
  %   \includegraphics[width=\linewidth]{figures/heatmap/all_levels_mean_topK.png}
  % \end{subfigure}
  \includegraphics[width=\linewidth]{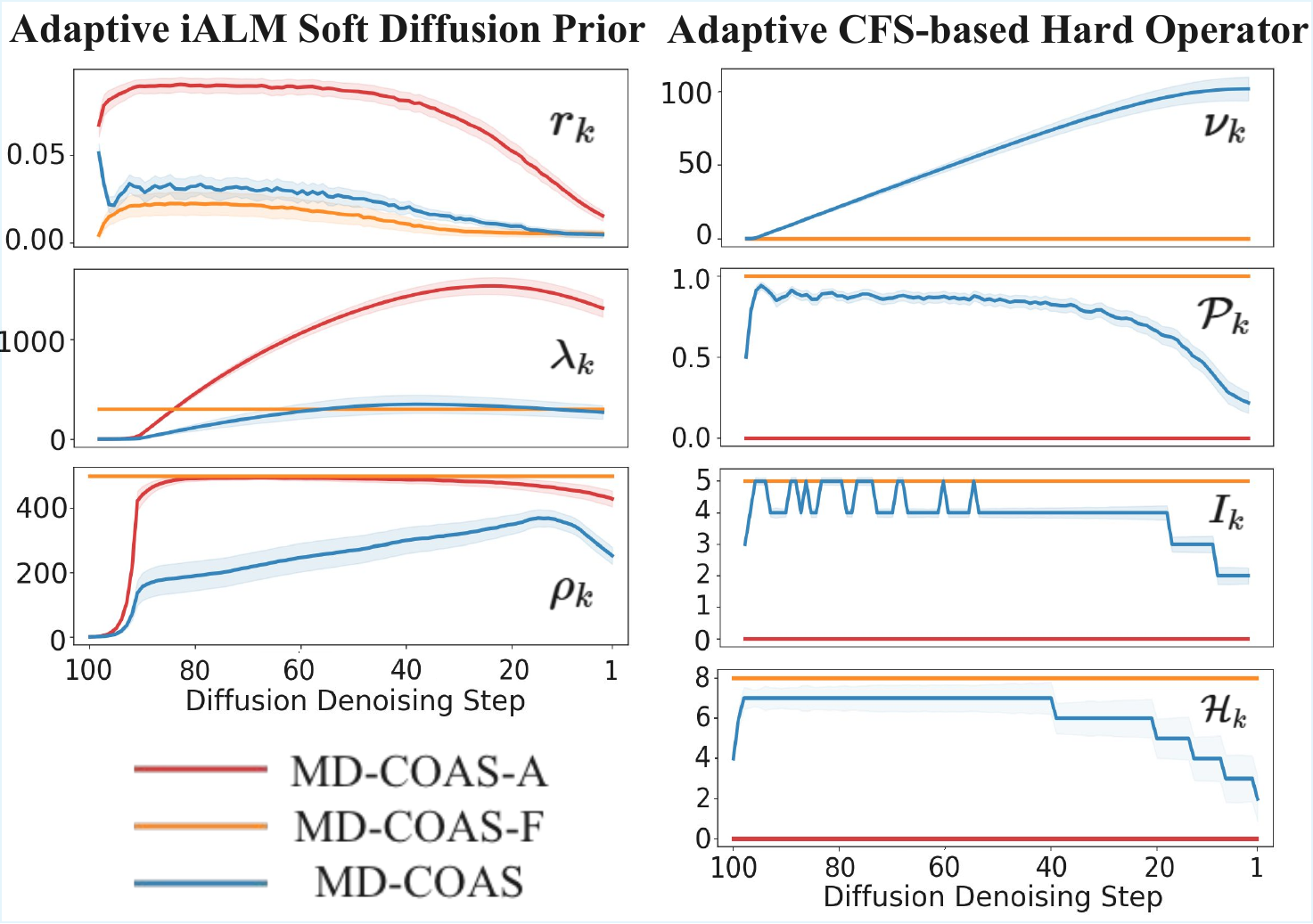}

  \caption{Adaptive scheduling of safety-enforcement variables over reverse diffusion steps in the 2D environment, averaged across all levels and seeds, for MD-COAS and two ablations. Left: residual-driven updates of the iALM dual variables $(\lambda_k,\rho_k)$ based on $r_k$. Right: scheduled CFS-based hard-projection parameters $(p_k, I_k, \mathcal{H}_k)$ and the compute dual $\nu_k$.}
  \label{fig:mechanism}
  \vspace{-0.5cm}
\end{figure}

\vspace{-0.2cm}
\subsection{2D Constrained Obstacle Avoidance}
\vspace{-0.1cm}

We evaluate MD-COAS against model-based baselines on a 2D constrained obstacle-avoidance benchmark. The maps are grouped into three families: (1) Easy Maps (L1--L3), which contain randomly placed circles and squares, with obstacle density increasing from approximately $2\%$ to $6\%$; (2) Constrained Maps (L4--L6), with more circles and squares, where obstacle density ranges from approximately $10\%$ to $16\%$; and (3) Union Maps (L7--L10), where primitive shapes are merged into random non-convex obstacles, with density increasing from approximately $18\%$ to $33\%$. For each level, we evaluate 10 random seeds. For each seed, each method generates 20 trajectory multimodals. The robot starts near the bottom region of the map and aims for a goal near the top region. The robot radius is fixed at $0.05$ in all experiments.

As illustrated in Fig.~\ref{fig:performance}, we report the Safe \& Success Rate (SSR) at different levels. MD-COAS and its variant MD-COAS-F consistently achieve 1.00 SSR in all Easy and Constrained Maps. In contrast, other baselines degrade in Constrained Maps, with EB-MBD dropping to approximately 0.7 and MDOC to approximately 0.5. For the highly non-convex Union Maps, MD-COAS and MD-COAS-F maintain approximately 0.5 SSR, outperforming other baselines under the same settings. The cost maps further show that MD-COAS achieves the lowest planning cost across levels and exhibits the fastest convergence during the diffusion denoising process.

\begin{figure*}[tb!]
  \centering

    \caption{7-DoF arm obstacle-avoidance task in D3IL. Left: environment overview with a representative final trajectory produced by MD-COAS. Right Three: final planned end-effector trajectories over five seeds (100 trajectory multimodals in total) as the obstacle radius increases ($0.03\!\rightarrow\!0.04\!\rightarrow\!0.05$), visualized on the 2D workspace plane.}
  \begin{subfigure}[b]{0.27\textwidth}
    \centering
    \includegraphics[width=\linewidth]{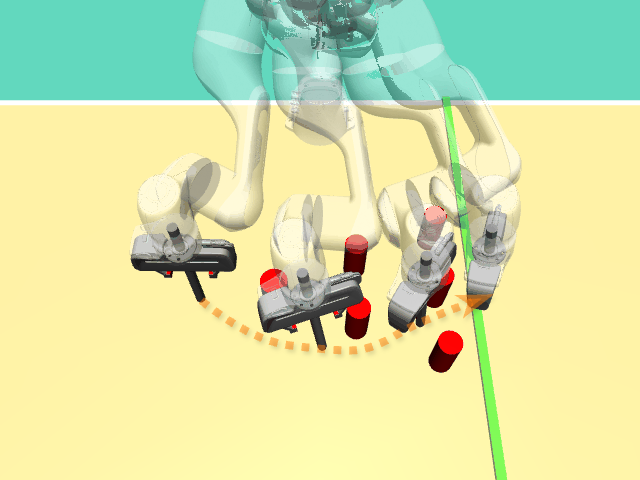}
  \end{subfigure}\hfill
  \begin{subfigure}[b]{0.17\textwidth}
    \centering
    \includegraphics[width=\linewidth]{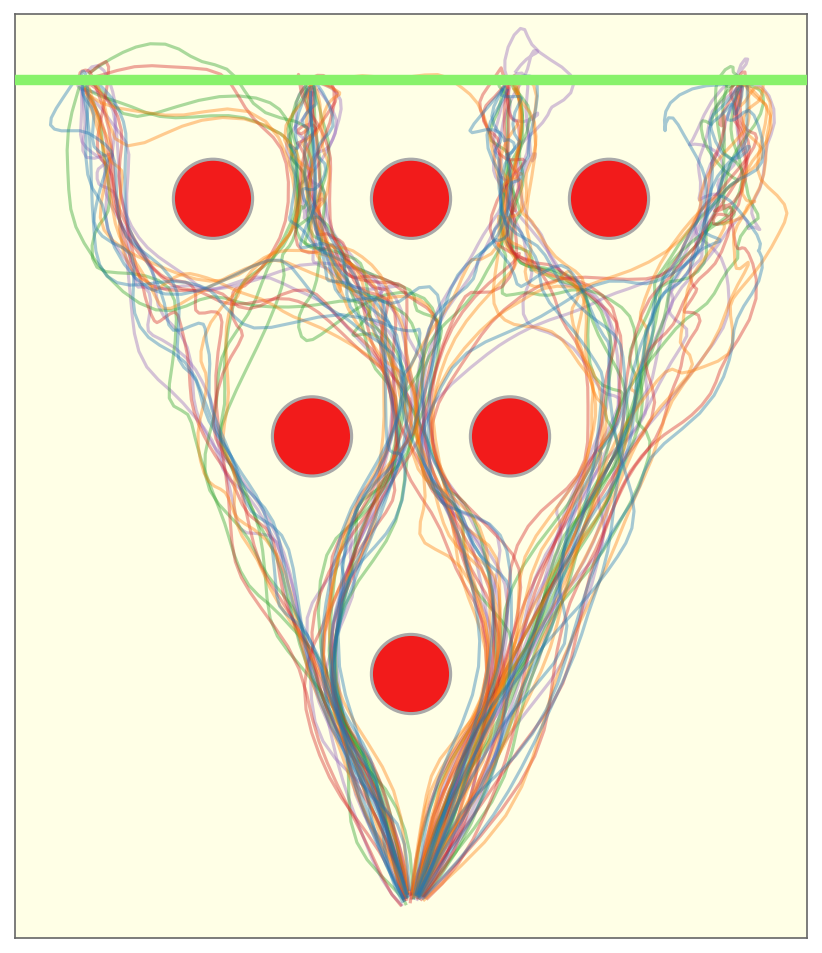}
  \end{subfigure}\hfill
  \begin{subfigure}[b]{0.17\textwidth}
    \centering
    \includegraphics[width=\linewidth]{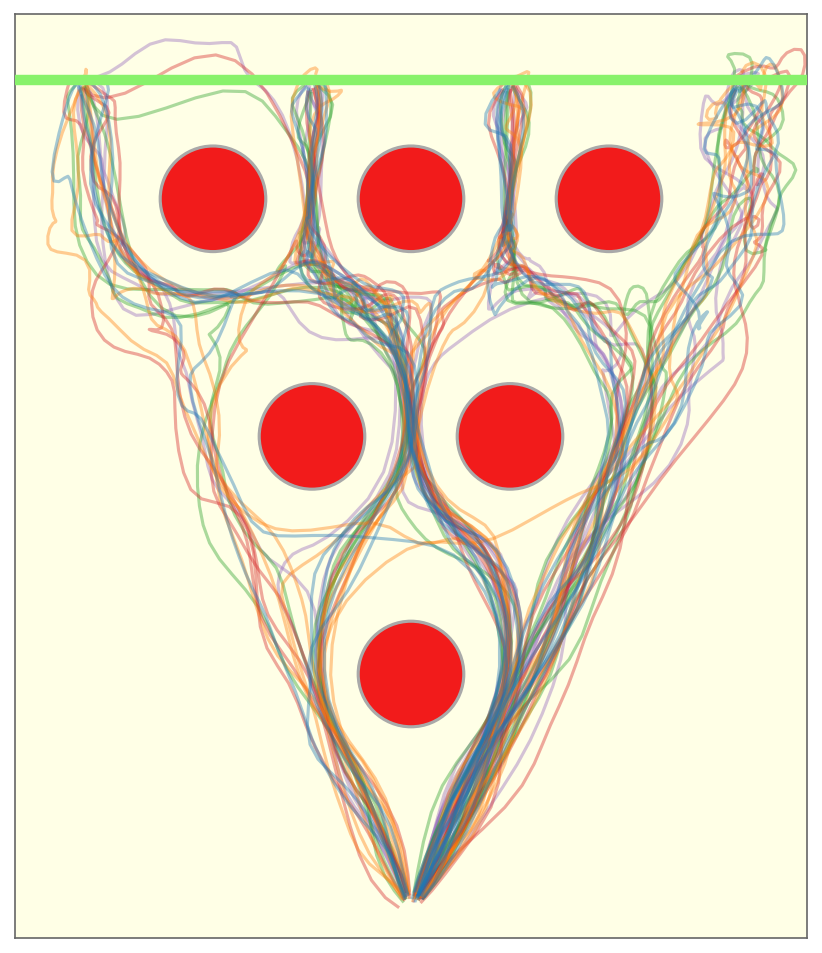}
  \end{subfigure}\hfill
  \begin{subfigure}[b]{0.17\textwidth}
    \centering
    \includegraphics[width=\linewidth]{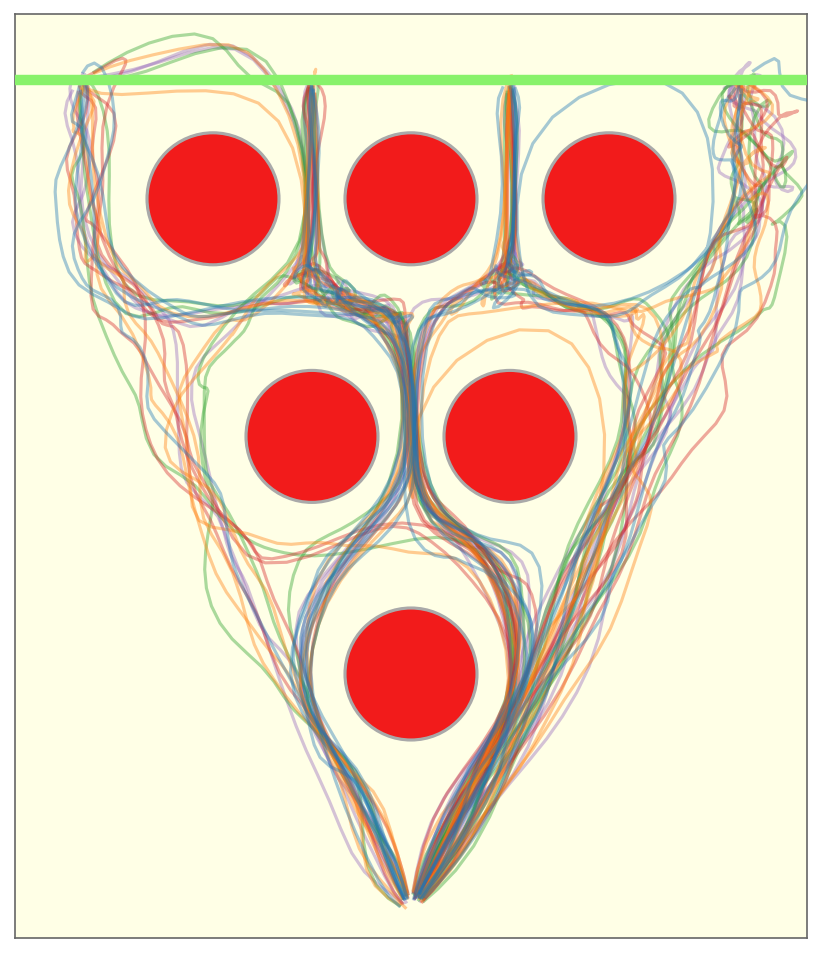}
  \end{subfigure}\hfill
  \begin{subfigure}[b]{0.15\textwidth}
    \centering
    \includegraphics[width=\linewidth]{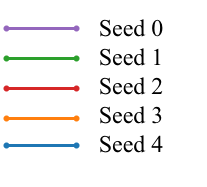}
  \end{subfigure}
  \label{fig:exp2}
\end{figure*}

\begin{table*}[tb!]
\centering
\small
\setlength{\tabcolsep}{2pt}
\renewcommand{\arraystretch}{1}

\begin{tabular}{c|cccc|cccc|cccc}
\toprule
& \multicolumn{4}{c|}{Obstacle Radius$=0.03$} 
& \multicolumn{4}{c|}{Obstacle Radius$=0.04$}
& \multicolumn{4}{c}{Obstacle Radius$=0.05$} \\
\cmidrule(lr){2-5} \cmidrule(lr){6-9} \cmidrule(lr){10-13}
Method 
& \textbf{SSR}($\uparrow$) & \textbf{V}($\downarrow$) & \textbf{T}($\downarrow$) & \textbf{P}($\downarrow$)
& \textbf{SSR}($\uparrow$) & \textbf{V}($\downarrow$) & \textbf{T}($\downarrow$) & \textbf{P}($\downarrow$)
& \textbf{SSR}($\uparrow$) & \textbf{V}($\downarrow$) & \textbf{T}($\downarrow$) & \textbf{P}($\downarrow$) \\
\midrule

SafeDiffuser~\cite{xiao2023safediffuser}
& 0.45 & 1.30 & \cellcolor{orange!25}5.49$^{0.20}$ & 1.57$^{0.05}$
& 0.35 & 2.68 & 11.53$^{1.30}$ & 1.56$^{0.05}$
& 0.30 & 4.40 & 11.18$^{0.10}$ & 1.54$^{0.03}$
\\

DPCC-C~\cite{romer2024diffusion}
& 0.75 & \cellcolor{orange!25}0.78 & 1528.91$^{571.70}$ & 1.32$^{0.39}$
& 0.54 & 1.44 & 2561.96$^{1818.70}$ & 1.51$^{0.53}$
& 0.64 & 2.10 & 3460.31$^{1568.70}$ & 1.44$^{0.40}$
\\

EB-MBD~\cite{mishra2025eb} 
& \cellcolor{orange!25}0.98 
& \cellcolor{red!25}\textbf{0.00} 
& \cellcolor{red!25}\textbf{4.37}$^{2.00}$ 
& 1.01$^{0.05}$
& 0.91 
& \cellcolor{red!25}\textbf{0.00} 
& \cellcolor{red!25}\textbf{4.02}$^{0.60}$ 
& 1.02$^{0.05}$
& \cellcolor{orange!25}0.82 
& \cellcolor{red!25}\textbf{0.00} 
& \cellcolor{red!25}\textbf{4.06}$^{0.60}$ 
& 1.01$^{0.06}$
\\

MDOC~\cite{he2026}
& \cellcolor{red!25}\textbf{1.00} 
& \cellcolor{red!25}\textbf{0.00} 
& 5.65$^{1.90}$ 
& \cellcolor{red!25}\textbf{0.89}$^{0.04}$
& \cellcolor{orange!25}0.99 
& \cellcolor{orange!25}0.04 
& \cellcolor{orange!25}4.90$^{0.60}$ 
& \cellcolor{red!25}\textbf{0.88}$^{0.04}$
& 0.73 
& \cellcolor{orange!25}0.02 
& \cellcolor{orange!25}4.87$^{0.70}$ 
& \cellcolor{red!25}\textbf{0.90}$^{0.04}$
\\

MD-COAS (Ours)
& \cellcolor{red!25}\textbf{1.00} 
& \cellcolor{red!25}\textbf{0.00} 
& 13.53$^{2.90}$ 
& \cellcolor{orange!25}0.99$^{0.04}$
& \cellcolor{red!25}\textbf{1.00} 
& \cellcolor{red!25}\textbf{0.00} 
& 13.98$^{2.10}$ 
& \cellcolor{orange!25}0.99$^{0.05}$
& \cellcolor{red!25}\textbf{1.00} 
& \cellcolor{red!25}\textbf{0.00} 
& 14.87$^{2.10}$ 
& \cellcolor{orange!25}1.00$^{0.07}$
\\
\bottomrule
\end{tabular}
\caption{Quantitative results for 7-DoF arm obstacle avoidance as obstacle radius increases ($0.03\!\rightarrow\!0.04\!\rightarrow\!0.05$). Columns report \textbf{SSR}($\uparrow$) (Safe \& Success Rate), \textbf{V}($\downarrow$) (violation rate, \%), \textbf{T}($\downarrow$) (planning time), and \textbf{P}($\downarrow$) (path length). Values are means across all seeds, and path-length means over safe \& success trajectories, with standard deviations given as right superscripts. Note that CFS projection is applied to all unweighted trajectory samples, introducing a trade-off between planning performance and computation time. Nevertheless, our method still achieves a substantial planning-time advantage over DPCC-C, which also performs batch sample projection but does not fully exploit GPU parallelization.}
\label{tab:exp2}
\vspace{-0.4cm}
\end{table*}

To better interpret the denoising dynamics, we visualize samples at 90\%, 50\%, and 10\% of the reverse process at L10 (Fig.~\ref{fig:diffusion_process}). MBD continues to generate trajectories that penetrate obstacles in late denoising due to the lack of safety enforcement. EB-MBD uses a fixed soft barrier, which cannot adjust its parameters to the varying violation patterns across reverse steps, so most final samples still violate constraints. MDOC enforces safety through CBF projections, yet the induced feasible set is overly restrictive, slowing reverse convergence (trajectories remain partially refined around 50\%), failing reaching target, and collapsing multimodality into one final mode in a constrained environment. In contrast, MD-COAS-F and MD-COAS maintain strong safety with faster convergence while preserving a larger effective feasible region during denoising, enabling broader multimodal exploration and producing diverse high-quality trajectories.

\textbf{Why constraint optimization (CO)?}
Fig.~\ref{fig:performance} shows that model-based diffusion baselines struggle to sustain high SSR as constraint nonconvexity increases: their evaluation costs remain high at deeper difficulty levels, indicating frequent collisions or conservative, suboptimal trajectories. Among our variants, MD-COAS-A converges but consistently exhibits a higher terminal cost floor and reduced SSR in tight, highly nonconvex maps, suggesting that soft feasibility weighting alone may stall in more constrained environments and cannot ensure strict collision avoidance. In contrast, adding the CFS-based hard projection (MD-COAS-F and MD-COAS) provides an explicit feasibility correction that pulls proposals back into a feasible set, yielding lower costs and higher SSR. 

\textbf{Why adaptive scheduling (AS)?}
While hard projection improves robustness, applying it uniformly is computationally expensive. Fig.~\ref{fig:performance} indicates that MD-COAS-F attains strong safety and low cost but pays a higher runtime cost when the environment becomes more constrained, whereas the MD-COAS achieves comparable terminal performance with improved efficiency. The reason is that AS treats safety enforcement as a controllable computing resource: it increases feasibility pressure when violations persist via soft-prior updates and invokes hard projection only when needed under a budget via scheduled $(p_k,I_k,\mathcal{H}_k)$. As in Fig.~\ref{fig:mechanism}, MD-COAS dynamically modulates these variables across diffusion steps, concentrating the projection effort on the steps when most needed, thereby improving runtime without sacrificing final solution quality.

\subsection{D3IL 7-DoF Arm Avoidance}
\vspace{-0.2cm}
We further evaluate the robustness of MD-COAS in a higher-dimensional 7-DoF robotic arm avoidance task built upon D3IL~\cite{jia2024towards}. Unlike the 2D benchmark, which performs planning in a two-dimensional control space, this setting requires trajectory optimization in the 7-dimensional joint-space control of the robotic arm. Although the end-effector motion remains constrained to a 2D plane, its position is determined by joint variables through a nonlinear kinematic mapping, making the system dynamics non-linear. The CFS-based projection operator is directly applied in the 7-DoF joint space. The robotic arm must avoid cylindrical obstacles (shown in red) and reach a target line (green), as illustrated in Fig.~\ref{fig:exp2}. The obstacle radius is varied across three difficulty levels (0.03, 0.04, and 0.05), while the end-effector radius is fixed at 0.01. For each obstacle level, we evaluated performance on 5 random seeds, with 20 trajectory multimodals conducted per seed.

Table~\ref{tab:exp2} reports results with increasing obstacle radius. As the obstacle size increases, the feasible region progressively shrinks, making safe trajectory generation more challenging.
Overall, the observed trends are consistent with those in the 2D benchmark. Both SafeDiffuser and DPCC-C show reduced robustness as task difficulty increases. EB-MBD maintains zero violations but exhibits a gradual decline in success rate under tighter constraints. MDOC performs competitively at lower difficulty levels, but exhibits both minor violations and a noticeable drop in success rate under tighter constraints. This behavior is further reflected in its path characteristics. The shorter paths of MDOC are attributable to its tighter feasible set, which enforces more deterministic behavior within each trajectory, whereas MD-COAS preserves greater stochasticity to maintain trajectory diversity, occasionally introducing mild detours and slightly increasing path length. In contrast, MD-COAS consistently achieves 100\% success with zero violations at all obstacle levels, maintaining stable performance as task difficulty increases. These results align with the observations of the 2D Constrained Obstacle Avoidance experiment and further demonstrate the robustness of the proposed method in higher-dimensional settings. 

\section{Conclusion}
% \vspace{-0.2cm}
In this work, we propose MD-COAS, a constraint optimization framework that combines an iALM soft diffusion prior and a CFS-based hard projection operator. We also develop constraint scheduling to adaptively co-optimize for both soft diffusion prior and CFS hard projection computation, along with diffusion reverse scheduling. The results show the superior performance of our method in highly constrained 2D and 7-DoF environments. Future directions include improving sampling projection efficiency, adapting the planner to higher-dimensional tasks with richer dynamics models, and exploring under multi-robot coordination settings.

\bibliographystyle{IEEEtran}
\bibliography{references}

@inproceedings{karaman2011anytime,
  title={Anytime motion planning using the RRT},
  author={Karaman, Sertac and Walter, Matthew R and Perez, Alejandro and Frazzoli, Emilio and Teller, Seth},
  booktitle={International Conference on Robotics and Automation},
  pages={1478--1483},
  year={2011},
  organization={IEEE}
}

@inproceedings{kuffner2000rrt,
  title={RRT-connect: An efficient approach to single-query path planning},
  author={Kuffner, James J and LaValle, Steven M},
  booktitle={Proceedings 2000 ICRA. Millennium conference. IEEE international conference on robotics and automation. Symposia proceedings (Cat. No. 00CH37065)},
  volume={2},
  pages={995--1001},
  year={2000},
  organization={IEEE}
}

@inproceedings{xiao2023safediffuser,
  title={Safediffuser: Safe planning with diffusion probabilistic models},
  author={Xiao, Wei and Wang, Tsun-Hsuan and Gan, Chuang and Hasani, Ramin and Lechner, Mathias and Rus, Daniela},
  booktitle={International Conference on Learning Representations},
  year={2023}
}

@inproceedings{carvalho2023motion,
  title={Motion planning diffusion: Learning and planning of robot motions with diffusion models},
  author={Carvalho, Joao and Le, An T and Baierl, Mark and Koert, Dorothea and Peters, Jan},
  booktitle={International Conference on Intelligent Robots and Systems},
  pages={1916--1923},
  year={2023},
  organization={IEEE}
}

@inproceedings{bonalli2019gusto,
  title={Gusto: Guaranteed sequential trajectory optimization via sequential convex programming},
  author={Bonalli, Riccardo and Cauligi, Abhishek and Bylard, Andrew and Pavone, Marco},
  booktitle={International Conference on Robotics and Automation},
  pages={6741--6747},
  year={2019},
  organization={IEEE}
}

@inproceedings{howell2019altro,
  title={ALTRO: A fast solver for constrained trajectory optimization},
  author={Howell, Taylor A and Jackson, Brian E and Manchester, Zachary},
  booktitle={International Conference on Intelligent Robots and Systems},
  pages={7674--7679},
  year={2019},
  organization={IEEE}
}

@inproceedings{jallet2022constrained,
  title={Constrained differential dynamic programming: A primal-dual augmented lagrangian approach},
  author={Jallet, Wilson and Bambade, Antoine and Mansard, Nicolas and Carpentier, Justin},
  booktitle={International Conference on Intelligent Robots and Systems},
  pages={13371--13378},
  year={2022},
  organization={IEEE}
}

@inproceedings{zhou2021distributed,
  title={Distributed motion coordination using convex feasible set based model predictive control},
  author={Zhou, Hongyu and Liu, Changliu},
  booktitle={International Conference on Robotics and Automation},
  pages={8330--8336},
  year={2021},
  organization={IEEE}
}

@inproceedings{peters2010relative,
  title={Relative entropy policy search},
  author={Peters, Jan and Mulling, Katharina and Altun, Yasemin},
  booktitle={AAAI conference on artificial intelligence},
  volume={24},
  number={1},
  pages={1607--1612},
  year={2010}
}

@article{kavraki1996probabilistic,
  title={Probabilistic roadmaps for path planning in high-dimensional configuration spaces},
  author={Kavraki, Lydia E and Svestka, Petr and Latombe, J-C and Overmars, Mark H},
  journal={IEEE transactions on Robotics and Automation},
  volume={12},
  number={4},
  pages={566--580},
  year={1996},
  publisher={IEEE}
}

@article{la2011motion,
  title={Motion planning},
  author={La Valle, Steven M},
  journal={IEEE Robotics \& Automation Magazine},
  volume={18},
  number={2},
  pages={108--118},
  year={2011},
  publisher={IEEE}
}

@article{kleinbort2018probabilistic,
  title={Probabilistic completeness of RRT for geometric and kinodynamic planning with forward propagation},
  author={Kleinbort, Michal and Solovey, Kiril and Littlefield, Zakary and Bekris, Kostas E and Halperin, Dan},
  journal={IEEE Robotics and Automation Letters},
  volume={4},
  number={2},
  pages={i--vii},
  year={2018},
  publisher={IEEE}
}

@article{williams2018information,
  title={Information-theoretic model predictive control: Theory and applications to autonomous driving},
  author={Williams, Grady and Drews, Paul and Goldfain, Brian and Rehg, James M and Theodorou, Evangelos A},
  journal={IEEE Transactions on Robotics},
  volume={34},
  number={6},
  pages={1603--1622},
  year={2018},
  publisher={IEEE}
}

@article{chi2025diffusion,
  title={Diffusion policy: Visuomotor policy learning via action diffusion},
  author={Chi, Cheng and Xu, Zhenjia and Feng, Siyuan and Cousineau, Eric and Du, Yilun and Burchfiel, Benjamin and Tedrake, Russ and Song, Shuran},
  journal={The International Journal of Robotics Research},
  volume={44},
  number={10-11},
  pages={1684--1704},
  year={2025},
  publisher={Sage Publications Sage UK: London, England}
}

@article{liu2018convex,
  title={The convex feasible set algorithm for real time optimization in motion planning},
  author={Liu, Changliu and Lin, Chung-Yen and Tomizuka, Masayoshi},
  journal={SIAM Journal on Control and optimization},
  volume={56},
  number={4},
  pages={2712--2733},
  year={2018},
  publisher={SIAM}
}

@article{schulman2014motion,
  title={Motion planning with sequential convex optimization and convex collision checking},
  author={Schulman, John and Duan, Yan and Ho, Jonathan and Lee, Alex and Awwal, Ibrahim and Bradlow, Henry and Pan, Jia and Patil, Sachin and Goldberg, Ken and Abbeel, Pieter},
  journal={The International Journal of Robotics Research},
  volume={33},
  number={9},
  pages={1251--1270},
  year={2014},
  publisher={Sage Publications Sage UK: London, England}
}

@article{christopher2024constrained,
  title={Constrained synthesis with projected diffusion models},
  author={Christopher, Jacob K and Baek, Stephen and Fioretto, Nando},
  journal={Advances in Neural Information Processing Systems},
  volume={37},
  pages={89307--89333},
  year={2024}
}

@article{pan2024model,
  title={Model-based diffusion for trajectory optimization},
  author={Pan, Chaoyi and Yi, Zeji and Shi, Guanya and Qu, Guannan},
  journal={Advances in Neural Information Processing Systems},
  volume={37},
  pages={57914--57943},
  year={2024}
}

@article{ames2016control,
  title={Control barrier function based quadratic programs for safety critical systems},
  author={Ames, Aaron D and Xu, Xiangru and Grizzle, Jessy W and Tabuada, Paulo},
  journal={IEEE Transactions on Automatic Control},
  volume={62},
  number={8},
  pages={3861--3876},
  year={2016},
  publisher={IEEE}
}

@article{ho2020denoising,
  title={Denoising diffusion probabilistic models},
  author={Ho, Jonathan and Jain, Ajay and Abbeel, Pieter},
  journal={Advances in Neural Information Processing Systems},
  volume={33},
  pages={6840--6851},
  year={2020}
}

@article{sahin2019inexact,
  title={An inexact augmented Lagrangian framework for nonconvex optimization with nonlinear constraints},
  author={Sahin, Mehmet Fatih and Alacaoglu, Ahmet and Latorre, Fabian and Cevher, Volkan and others},
  journal={Advances in Neural Information Processing Systems},
  volume={32},
  year={2019}
}

@article{xu2017admm,
  title={ADMM without a fixed penalty parameter: Faster convergence with new adaptive penalization},
  author={Xu, Yi and Liu, Mingrui and Lin, Qihang and Yang, Tianbao},
  journal={Advances in Neural Information Processing Systems},
  volume={30},
  year={2017}
}

@article{jia2024towards,
  title={Towards diverse behaviors: A benchmark for imitation learning with human demonstrations},
  author={Jia, Xiaogang and Blessing, Denis and Jiang, Xinkai and Reuss, Moritz and Donat, Atalay and Lioutikov, Rudolf and Neumann, Gerhard},
  journal={arXiv preprint arXiv:2402.14606},
  year={2024}
}

@article{mishra2025eb,
  title={EB-MBD: Emerging-Barrier Model-Based Diffusion for Safe Trajectory Optimization in Highly Constrained Environments},
  author={Mishra, Raghav and Manchester, Ian R},
  journal={arXiv preprint arXiv:2510.07700},
  year={2025}
}

@article{romer2024diffusion,
  title={Diffusion predictive control with constraints},
  author={R{\"o}mer, Ralf and von Rohr, Alexander and Schoellig, Angela P},
  journal={arXiv preprint arXiv:2412.09342},
  year={2024}
}

@article{janner2022planning,
  title={Planning with diffusion for flexible behavior synthesis},
  author={Janner, Michael and Du, Yilun and Tenenbaum, Joshua B and Levine, Sergey},
  journal={arXiv preprint arXiv:2205.09991},
  year={2022}
}

@article{zhang2025constrained,
  title={Constrained diffusers for safe planning and control},
  author={Zhang, Jichen and Zhao, Liqun and Papachristodoulou, Antonis and Umenberger, Jack},
  journal={arXiv preprint arXiv:2506.12544},
  year={2025}
}

@article{liang2025simultaneous,
  title={Simultaneous multi-robot motion planning with projected diffusion models},
  author={Liang, Jinhao and Christopher, Jacob K and Koenig, Sven and Fioretto, Ferdinando},
  journal={arXiv preprint arXiv:2502.03607},
  year={2025}
}

@article{he2026,
  title={Model-Based Diffusion Optimal Control for Multi-Robot Motion Planning},
  author={Zhilin He and Yorai Shaoul and Jiaoyang Li},
  journal={arXiv preprint arXiv:2607.12423},
  year={2026}
}

@article{liang2024multi,
  title={Multi-agent path finding in continuous spaces with projected diffusion models},
  author={Liang, Jinhao and Christopher, Jacob K and Koenig, Sven and Fioretto, Ferdinando},
  journal={arXiv preprint arXiv:2412.17993},
  year={2024}
}

@article{cheng2025safe,
  title={Safe and stable control via lyapunov-guided diffusion models},
  author={Cheng, Xiaoyuan and Tang, Xiaohang and Yang, Yiming},
  journal={arXiv preprint arXiv:2509.25375},
  year={2025}
}

@article{fishman2023diffusion,
  title={Diffusion models for constrained domains},
  author={Fishman, Nic and Klarner, Leo and De Bortoli, Valentin and Mathieu, Emile and Hutchinson, Michael},
  journal={arXiv preprint arXiv:2304.05364},
  year={2023}
}

@article{levine2018reinforcement,
  title={Reinforcement learning and control as probabilistic inference: Tutorial and review},
  author={Levine, Sergey},
  journal={arXiv preprint arXiv:1805.00909},
  year={2018}
}

@article{wohlberg2017admm,
  title={ADMM penalty parameter selection by residual balancing},
  author={Wohlberg, Brendt},
  journal={arXiv preprint arXiv:1704.06209},
  year={2017}
}

@book{wright1997primal,
  title={Primal-dual interior-point methods},
  author={Wright, Stephen J},
  year={1997},
  publisher={SIAM}
}

@book{betts2010practical,
  title={Practical methods for optimal control and estimation using nonlinear programming},
  author={Betts, John T},
  year={2010},
  publisher={SIAM}
}

@incollection{botev2013cross,
  title={The cross-entropy method for optimization},
  author={Botev, Zdravko I and Kroese, Dirk P and Rubinstein, Reuven Y and L’ecuyer, Pierre},
  booktitle={Handbook of statistics},
  volume={31},
  pages={35--59},
  year={2013},
  publisher={Elsevier}
}

\clearpage
\appendices
\section{Theoretical Analysis}
\label{app:theory}

\subsection{Proof Objective}
This appendix formalizes the safety-enforcement mechanism used by
MD-COAS in the main text. The goal is to show that, under standard
local regularity assumptions, the iALM prior in
\eqref{eq:alm_objective}--\eqref{eq:pk_alm} provides an informative
soft feasibility signal, the CFS-QP in \eqref{eq:cfs_qp_full} gives a
local hard correction for the convexified constraints, and the
batch-level gate in \eqref{eq:qp_gate} yields an expected residual
recursion whose contraction strength is controlled by the scheduled
projection probability $p_k$.

At reverse step $k$, let
$\mathcal B_k=\{\tau_{k,m}\}_{m=1}^{M}$ denote the rollout batch. For
each trajectory, define the same horizon-normalized violation used by
the main method,
\begin{equation}
\bar g(\tau_{k,m})
=
\frac{1}{T}\sum_{t=1}^{T}
[g^t(s_{k,m}^t,u_{k,m}^t)]_+ .
\label{eq:app_gbar}
\end{equation}
We track the same robust batch residual as
\eqref{eq:primal_residual},
\begin{equation}
R(\mathcal B_k)
=
\operatorname{Quantile}_{0.9}
\!\left(
\{\|\bar g(\tau_{k,m})\|_2\}_{m=1}^{M}
\right).
\label{eq:app_batch_residual}
\end{equation}
The iALM energy corresponding to
\eqref{eq:alm_objective} is
\begin{equation}
\Phi_k(\tau)
=
\mathcal J(\tau)
+\lambda_k^\top \bar g(\tau)
+\frac{\rho_k}{2}\|\bar g(\tau)\|_2^2 .
\label{eq:app_ialm_energy}
\end{equation}
The induced sampling weights are therefore proportional to
$\exp(-\Phi_k(\tau)/\beta)$, matching the target distribution in
\eqref{eq:pk_alm}.

\subsection{iALM Soft Feasibility Weighting}

\begin{lemma}[Monotonic soft feasibility weighting]
\label{lem:app_ialm_weight}
Let $\lambda_k\ge0$, $\rho_k>0$, and $\beta>0$. For any two rollouts
$\tau_a$ and $\tau_b$ with $\mathcal J(\tau_a)=\mathcal J(\tau_b)$ and
$\bar g(\tau_a)\succeq \bar g(\tau_b)\succeq0$, the iALM target in
\eqref{eq:app_ialm_energy} assigns no larger weight to $\tau_a$ than to
$\tau_b$. If at least one violated component is strictly larger, then
the weight of $\tau_a$ is strictly smaller.
\end{lemma}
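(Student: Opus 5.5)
The plan is to reduce the comparison of weights to a comparison of energies and then show that the penalty part of \eqref{eq:app_ialm_energy} is monotone in the componentwise order on the violation vector. Since the weight of a rollout is proportional to $w(\tau)=\exp(-\Phi_k(\tau)/\beta)$ with the same normalizing constant for every member of the batch, and $x\mapsto\exp(-x/\beta)$ is strictly decreasing for $\beta>0$, we have $w(\tau_a)\le w(\tau_b)$ if and only if $\Phi_k(\tau_a)\ge\Phi_k(\tau_b)$. The strict inequalities correspond in the same way. Because $\mathcal J(\tau_a)=\mathcal J(\tau_b)$, the cost terms cancel in the difference, so it suffices to study
\begin{equation*}
\Psi_k(x)=\lambda_k^\top x+\frac{\rho_k}{2}\|x\|_2^2
\end{equation*}
on the nonnegative orthant and show that $x\succeq y\succeq0$ implies $\Psi_k(x)\ge\Psi_k(y)$.

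For the weak inequality, write $x=\bar g(\tau_a)$, $y=\bar g(\tau_b)$, and $d=x-y\succeq0$. The linear term gives $\lambda_k^\top d\ge0$ because $\lambda_k\ge0$ componentwise. For the quadratic term,
\begin{equation*}
\|x\|_2^2-\|y\|_2^2=\sum_i (x_i-y_i)(x_i+y_i),
\end{equation*}
and every summand is nonnegative since $x_i-y_i\ge0$ and $x_i+y_i\ge0$ on the orthant. Multiplying by $\rho_k/2>0$ keeps the sign, so $\Psi_k(x)-\Psi_k(y)\ge0$.

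For the strict case, suppose some component $i$ has $x_i>y_i$. Then $x_i>y_i\ge0$, so $x_i+y_i\ge x_i>0$. The $i$-th quadratic summand is therefore strictly positive, and with $\rho_k>0$ the difference $\Psi_k(x)-\Psi_k(y)$ is strictly positive. The linear term only adds a nonnegative amount, so this holds even when $\lambda_k=0$. The strict energy inequality then becomes a strict weight inequality by strict monotonicity of the exponential.

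The only delicate point is the strict case: the margin must come from the quadratic penalty and $\rho_k>0$, not from $\lambda_k$, which may vanish. One small convention also needs stating: if $\bar g$ is scalar, as in \eqref{eq:app_gbar}, read $\lambda_k^\top$ as scalar multiplication and $\succeq$ as $\ge$. The argument is identical in that case.
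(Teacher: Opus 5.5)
Your proof is correct and follows the same route as the paper. The paper writes the log-weight ratio, which is exactly your negated energy difference $-(\Psi_k(x)-\Psi_k(y))/\beta$; it signs the linear term with $\lambda_k\ge0$ and the quadratic term with $\rho_k>0$, and it also takes strictness from the quadratic term. Your factorization $\sum_i (x_i-y_i)(x_i+y_i)$, together with your remark that the strict case cannot rely on $\lambda_k$, spells out the step the paper only asserts when it says the squared norm strictly increases.
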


\begin{proof}
The unnormalized iALM weight is
\[
W_k(\tau)=\exp(-\Phi_k(\tau)/\beta).
\]
Since the two rollouts have the same task cost, their log-weight ratio
is
\begin{equation}
\begin{aligned}
\log\frac{W_k(\tau_a)}{W_k(\tau_b)}
&=
-\frac{1}{\beta}
\lambda_k^\top
\bigl(\bar g(\tau_a)-\bar g(\tau_b)\bigr)\\
&\quad
-\frac{\rho_k}{2\beta}
\left(
\|\bar g(\tau_a)\|_2^2
-\|\bar g(\tau_b)\|_2^2
\right).
\end{aligned}
\label{eq:app_weight_ratio}
\end{equation}
Because
$\bar g(\tau_a)\succeq \bar g(\tau_b)\succeq0$ and
$\lambda_k\ge0$, the linear term in
\eqref{eq:app_weight_ratio} is non-positive. The norm difference is
also non-negative, so the quadratic term is non-positive. Therefore
$W_k(\tau_a)\le W_k(\tau_b)$. If a violated component is strictly
larger, then the quadratic norm strictly increases; since $\rho_k>0$,
the second term is strictly negative and
$W_k(\tau_a)<W_k(\tau_b)$. Thus the iALM prior gives a continuous,
nonzero, and monotone feasibility signal to the MCSA weights rather
than discarding infeasible samples through a hard indicator.
\end{proof}

\subsection{Assumptions}

\begin{assumption}[Regular rollout model]
\label{ass:app_rollout}
On the compact domain reached by the planner, the rollout map from the
stacked control sequence to the trajectory is continuously
differentiable. The task cost $\mathcal J$ and every component of
$\bar g$ are locally Lipschitz and have bounded first derivatives.
\end{assumption}

\begin{assumption}[Inexact MCSA proposal]
\label{ass:app_mcsa}
Conditioned on the history $\mathcal F_k$, the MBD/MCSA proposal batch
$\tilde{\mathcal B}_k=\{\tilde\tau_{k,m}\}_{m=1}^{M}$ satisfies
\begin{equation}
\mathbb E
\!\left[
R(\tilde{\mathcal B}_k)\mid\mathcal F_k
\right]
\le
R(\mathcal B_k)+\delta_{m,k},
\label{eq:app_mcsa_error}
\end{equation}
where $\delta_{m,k}\ge0$ captures finite-sample MCSA error, diffusion
temperature bias, and numerical rollout error.
\end{assumption}

\begin{assumption}[Local CFS correction]
\label{ass:app_cfs_contract}
For the scheduled CFS operator
$\mathcal P_{I_k,\mathcal H_k}$ in \eqref{eq:cfs_qp_full}, the selected
obstacle-time half-spaces are valid local inner approximations of the
safe set, and the corresponding convex QPs are feasible. There exist
$\alpha_k\in(0,1]$ and $\delta_{q,k}\ge0$ such that
\begin{equation}
\begin{aligned}
&
\mathbb E
\!\left[
R\!\left(
\mathcal P_{I_k,\mathcal H_k}
(\tilde{\mathcal B}_k)
\right)
\mid\mathcal F_k
\right] \\
&\qquad\le
(1-\alpha_k)
\mathbb E
\!\left[
R(\tilde{\mathcal B}_k)\mid\mathcal F_k
\right]
+\delta_{q,k}.
\end{aligned}
\label{eq:app_cfs_error}
\end{equation}
The term $\delta_{q,k}$ is the inexactness floor from finite CFS
iterations, finite QP tolerance, active-set truncation, and rollout
linearization.
\end{assumption}

\subsection{CFS Local Hard Correction}

\begin{lemma}[Feasibility of the CFS-QP correction]
\label{lem:app_cfs_local}
Consider one nominal trajectory $\tilde\tau_{k,m}$ and the CFS-QP in
\eqref{eq:cfs_qp_full}. If the half-spaces in
\eqref{eq:cfs_halfspace} are valid inner approximations and the QP is
solved exactly, then the corrected linearized rollout satisfies every
selected convexified collision constraint in $\mathcal A_{k,m}$. If the
QP is solved with primal tolerance $\epsilon_{\mathrm{qp}}$, the
violation of each selected linearized constraint is at most
$\epsilon_{\mathrm{qp}}$. If the local CFS model error at the corrected
control is bounded by $e_{\mathrm{cfs}}^t$, then the corresponding
nonlinear signed-distance violation is bounded by
$\epsilon_{\mathrm{qp}}+e_{\mathrm{cfs}}^t$.
\end{lemma}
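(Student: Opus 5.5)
The argument has three layers, matching the three claims: exact QP feasibility, inexact QP tolerance, and transfer to the nonlinear signed distance through a model-error term. Throughout I work with one nominal $\tilde\tau_{k,m}$ and its control sequence $u_{k,m}$. The constraints in \eqref{eq:cfs_qp_full} are the linear inequalities \eqref{eq:cfs_linear_u}. These are obtained by substituting the rollout linearization \eqref{eq:rollout_lin} into the half-space \eqref{eq:cfs_halfspace}.

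First I verify the algebraic identity. For any $u$, write the linearized position $\hat q^t(u)=q_{k,m}^t+J_{q,k,m}^t(u-u_{k,m})$ and the linearized distance $\ell_j^t(u):=\phi_j(q_{k,m}^t)+\hat\nabla\phi_j(q_{k,m}^t)^\top(\hat q^t(u)-q_{k,m}^t)$. With $a_{j,k,m}^t,b_{j,k,m}^t$ as in \eqref{eq:cfs_linear_u}, this gives
\[
\ell_j^t(u)=(a_{j,k,m}^t)^\top u-b_{j,k,m}^t .
\]
Hence $(a_{j,k,m}^t)^\top u\ge b_{j,k,m}^t$ is equivalent to $\hat q^t(u)\in\mathcal F_j^t(q_{k,m}^t)$. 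If the QP is feasible, as Assumption~\ref{ass:app_cfs_contract} guarantees, the objective $\frac12\|u-u_{k,m}\|_R^2$ with $R\succ0$ is strongly convex over a nonempty closed polyhedron. So an exact minimizer $u^\star_{k,m}$ exists and satisfies all constraints indexed by $\mathcal A_{k,m}$. Consequently $\hat q^t(u^\star_{k,m})\in\mathcal F(q_{k,m})$. Invoking the inner-approximation hypothesis \eqref{eq:cfs_feasible_set} then places these linearized positions in the safe set for the selected pairs, which proves the first claim.

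Second, for an inexact solve I take primal tolerance to mean $\max_{(j,t)\in\mathcal A_{k,m}}[b_{j,k,m}^t-(a_{j,k,m}^t)^\top u]_+\le\epsilon_{\mathrm{qp}}$ at the returned $u$. By the identity above, this reads $\ell_j^t(u)\ge-\epsilon_{\mathrm{qp}}$, which is exactly the second claim.

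Third, I define the local CFS model error at the corrected control as
\[
e_{\mathrm{cfs}}^t\ge \ell_j^t(u)-\phi_j(q^t(u)),
\]
where $q^t(u)$ is the true nonlinear rollout. This error combines the rollout linearization residual and the signed-distance linearization residual. Splitting $\phi_j(q^t(u))=\ell_j^t(u)-(\ell_j^t(u)-\phi_j(q^t(u)))$ gives $\phi_j(q^t(u))\ge-\epsilon_{\mathrm{qp}}-e_{\mathrm{cfs}}^t$. This means the nonlinear violation $[-\phi_j(q^t(u))]_+$ is at most $\epsilon_{\mathrm{qp}}+e_{\mathrm{cfs}}^t$.

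The main obstacle is making $e_{\mathrm{cfs}}^t$ a genuine finite quantity rather than a definition. I would justify it through Assumption~\ref{ass:app_rollout}. The rollout map is $C^1$ on a compact domain, so its Taylor remainder is $o(\|u-u_{k,m}\|)$, and it is quadratic if one additionally has a Lipschitz Jacobian. Since $\phi_j$ is Lipschitz with bounded gradient, its first-order residual is controlled similarly. Composing the two gives a bound on $e_{\mathrm{cfs}}^t$ in terms of $\|u^\star_{k,m}-u_{k,m}\|_R$, and this step size is itself bounded by the QP's minimal-perturbation property. I would state the lemma with $e_{\mathrm{cfs}}^t$ as given and record this as a remark. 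I would also note that convexity of obstacles makes $\phi_j$ outside the obstacle amenable to the supporting half-space argument of \cite{liu2018convex}.
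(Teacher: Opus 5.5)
Your proposal is correct and matches the paper's proof step for step: substitute the rollout linearization into the CFS half-space to get the rows $(a_{j,k,m}^t)^\top u\ge b_{j,k,m}^t$, read off exact and $\epsilon_{\mathrm{qp}}$-tolerant feasibility, then add a local model-error term to pass to the nonlinear signed distance. Your additions are minor tightenings of the same argument: the explicit check that $\ell_j^t(u)=(a_{j,k,m}^t)^\top u-b_{j,k,m}^t$, the existence argument for the minimizer, and the one-sided definition of $e_{\mathrm{cfs}}^t$, which is weaker than the paper's absolute-value bound but suffices.
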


\begin{proof}
For a selected pair $(j,t)\in\mathcal A_{k,m}$, CFS constructs the
half-space
\begin{equation}
\mathcal F_j^t
=
\left\{
q:\,
\phi_j(q_{k,m}^t)
+\hat\nabla\phi_j(q_{k,m}^t)^\top
(q-q_{k,m}^t)
\ge0
\right\}.
\label{eq:app_cfs_halfspace_short}
\end{equation}
By the local-validity condition in Assumption~\ref{ass:app_cfs_contract},
$\mathcal F_j^t$ lies inside the local safe side of obstacle $j$. The
rollout linearization in \eqref{eq:rollout_lin} defines the affine
prediction
\begin{equation}
\ell^t(u)
=
q_{k,m}^t
+J_{q,k,m}^t(u-u_{k,m}).
\label{eq:app_rollout_lin_short}
\end{equation}
Substituting \eqref{eq:app_rollout_lin_short} into
\eqref{eq:app_cfs_halfspace_short} yields the linear control-space
constraint
\begin{equation}
(a_{j,k,m}^t)^\top u\ge b_{j,k,m}^t,
\label{eq:app_cfs_row}
\end{equation}
with $a_{j,k,m}^t$ and $b_{j,k,m}^t$ defined as in
\eqref{eq:cfs_linear_u}. The QP in \eqref{eq:cfs_qp_full} optimizes
over the intersection of all such rows. Hence, an exact solution
$u^\star$ satisfies \eqref{eq:app_cfs_row} for every
$(j,t)\in\mathcal A_{k,m}$, so the corresponding linearized workspace
point belongs to every selected half-space. Since each selected
half-space is a local inner approximation, the corrected linearized
rollout is locally safe for all selected constraints. If the solver
returns an inexact solution satisfying
\begin{equation}
(a_{j,k,m}^t)^\top u^\star
\ge
b_{j,k,m}^t-\epsilon_{\mathrm{qp}},
\label{eq:app_qp_tol}
\end{equation}
then the selected linearized constraint violation is
\begin{equation}
\left[
b_{j,k,m}^t
-(a_{j,k,m}^t)^\top u^\star
\right]_+
\le
\epsilon_{\mathrm{qp}},
\end{equation}
which proves the linearized tolerance statement. For the nonlinear
rollout, define the local CFS model error
\begin{equation}
\begin{aligned}
e_{\mathrm{cfs}}^t
&\ge
\Bigl|
\phi_j(q^t(u^\star))
-\phi_j(q_{k,m}^t)\\
&\quad
-\hat\nabla\phi_j(q_{k,m}^t)^\top
(\ell^t(u^\star)-q_{k,m}^t)
\Bigr|.
\end{aligned}
\label{eq:app_cfs_model_error}
\end{equation}
Combining \eqref{eq:app_cfs_model_error} with the
$\epsilon_{\mathrm{qp}}$-feasible half-space constraint yields a
nonlinear violation no larger than
$\epsilon_{\mathrm{qp}}+e_{\mathrm{cfs}}^t$. The accumulated effect of
this local model error, finite iterations, and active-set truncation is
the projection floor $\delta_{q,k}$ used in
Assumption~\ref{ass:app_cfs_contract}.
\end{proof}

\subsection{Batch-Gated Residual Recursion}

The appendix uses the batch gate requested in the main discussion: one
Bernoulli variable is sampled for the whole batch,
\begin{equation}
Z_k\sim\operatorname{Bernoulli}(p_k).
\label{eq:app_batch_gate_var}
\end{equation}
The corrected batch is
\begin{equation}
\hat{\mathcal B}_k
=
\begin{cases}
\mathcal P_{I_k,\mathcal H_k}(\tilde{\mathcal B}_k),
& Z_k=1,\\
\tilde{\mathcal B}_k,
& Z_k=0.
\end{cases}
\label{eq:app_batch_gate}
\end{equation}

\begin{thm}[Residual recursion under the batch gate]
\label{thm:app_batch_recursion}
Under Assumptions~\ref{ass:app_rollout}--\ref{ass:app_cfs_contract},
one reverse step of MD-COAS satisfies
\begin{equation}
\mathbb E[R(\hat{\mathcal B}_k)]
\le
(1-\alpha_k p_k)\mathbb E[R(\mathcal B_k)]
+\delta_{m,k}+p_k\delta_{q,k}.
\label{eq:app_batch_recursion}
\end{equation}
If, over a phase of denoising,
$\alpha_k\ge\underline\alpha>0$,
$p_k\ge\underline p>0$,
$\delta_{m,k}\le\bar\delta_m$, and
$\delta_{q,k}\le\bar\delta_q$, then the residual sequence in that phase
obeys
\begin{equation}
\limsup_{\ell\to\infty}
\mathbb E[R_\ell]
\le
\frac{\bar\delta_m+\underline p\,\bar\delta_q}
{\underline\alpha\,\underline p}.
\label{eq:app_batch_floor}
\end{equation}
\end{thm}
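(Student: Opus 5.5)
The plan is to condition on the history $\mathcal F_k$ and split the expectation over the batch gate $Z_k$ from \eqref{eq:app_batch_gate_var}. I will take $Z_k$ to be independent of the proposal batch $\tilde{\mathcal B}_k$ given $\mathcal F_k$. By the tower property and \eqref{eq:app_batch_gate},
\begin{equation*}
\mathbb E[R(\hat{\mathcal B}_k)\mid\mathcal F_k]
=
p_k\,\mathbb E\!\left[R\!\left(\mathcal P_{I_k,\mathcal H_k}(\tilde{\mathcal B}_k)\right)\mid\mathcal F_k\right]
+(1-p_k)\,\mathbb E[R(\tilde{\mathcal B}_k)\mid\mathcal F_k].
\end{equation*}

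Next I apply Assumption~\ref{ass:app_cfs_contract} to the first term. This bounds the right-hand side by
\[
(1-\alpha_k p_k)\,\mathbb E[R(\tilde{\mathcal B}_k)\mid\mathcal F_k]+p_k\delta_{q,k}.
\]
The coefficient $1-\alpha_k p_k$ lies in $[0,1]$ because $\alpha_k,p_k\in[0,1]$, so it is nonnegative. That lets me substitute the upper bound from Assumption~\ref{ass:app_mcsa}, $\mathbb E[R(\tilde{\mathcal B}_k)\mid\mathcal F_k]\le R(\mathcal B_k)+\delta_{m,k}$, and obtain
\[
(1-\alpha_k p_k)R(\mathcal B_k)+(1-\alpha_k p_k)\delta_{m,k}+p_k\delta_{q,k}.
\]
The factor on $\delta_{m,k}$ is at most $1$, so it can be dropped. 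Taking total expectation then gives \eqref{eq:app_batch_recursion}.

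This step assumes $p_k$, $\alpha_k$, $\delta_{m,k}$ and $\delta_{q,k}$ are $\mathcal F_k$-measurable. That holds because $p_k$ is scheduled from the current residual and the duals before the gate is sampled. If that measurability fails, one would instead use the deterministic phase bounds directly.

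For the asymptotic floor \eqref{eq:app_batch_floor}, I identify $R_\ell$ with the residual of the batch entering the $\ell$-th step of the phase. The rollout and reweighting after the gate feed $\hat{\mathcal B}_k$ into the next step's $\mathcal B_{k-1}$. I take this link to be absorbed in $\delta_{m,k}$, with Assumption~\ref{ass:app_rollout} guaranteeing finiteness of all quantities on the compact domain. With $x_\ell=\mathbb E[R_\ell]\ge 0$, the uniform bounds give
\[
x_{\ell+1}\le(1-\underline\alpha\,\underline p)\,x_\ell+\bar\delta_m+\bar\delta_q.
\]
Here $p_k\delta_{q,k}\le\bar\delta_q$, and the contraction factor is controlled since $1-\alpha_kp_k\le 1-\underline\alpha\underline p$ and $x_\ell\ge0$. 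Unrolling gives
\[
\limsup_\ell x_\ell\le(\bar\delta_m+\bar\delta_q)/(\underline\alpha\,\underline p).
\]

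The stated numerator has $\underline p\,\bar\delta_q$ instead of $\bar\delta_q$, and recovering it is the main obstacle. Bounding $p_k\delta_{q,k}$ by $\underline p\,\bar\delta_q$ requires $p_k$ to stay near its lower bound. I will first try the route of treating $\underline p$ as the scheduled probability on the phase, i.e.\ $p_k\equiv\underline p$ or $p_k\le\underline p$ within the phase. Then $p_k\delta_{q,k}\le\underline p\,\bar\delta_q$ directly, and the geometric series gives exactly $(\bar\delta_m+\underline p\,\bar\delta_q)/(\underline\alpha\underline p)$. If that is not the intended reading, the fallback is the slightly weaker floor with $\bar\delta_q$, which still has the same $1/(\underline\alpha\underline p)$ contraction structure. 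I will state whichever version the phase hypotheses actually support.
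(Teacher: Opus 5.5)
Your derivation of the one-step recursion \eqref{eq:app_batch_recursion} is correct and matches the paper's argument. You split over the single gate $Z_k$, apply Assumption~\ref{ass:app_cfs_contract} on $Z_k=1$, substitute Assumption~\ref{ass:app_mcsa} using $1-\alpha_kp_k\ge0$, and drop the factor on $\delta_{m,k}$.

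The gap is in the floor bound: you do not prove \eqref{eq:app_batch_floor} under the stated hypotheses. Your primary route adds the assumption $p_k\le\underline p$ on the phase, which the theorem does not grant; it only gives $p_k\ge\underline p$. Your fallback bound $(\bar\delta_m+\bar\delta_q)/(\underline\alpha\,\underline p)$ is strictly weaker whenever $\underline p<1$ and $\bar\delta_q>0$. The misstep is bounding the contraction $1-\alpha_\ell p_\ell$ by its worst case $1-\underline\alpha\,\underline p$ \emph{separately} from the forcing term $p_\ell\delta_{q,\ell}$. The forcing from the projection floor carries the same factor $p_\ell$ as the contraction rate $\alpha_\ell p_\ell$. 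A large $p_\ell$ inflates the forcing but speeds contraction by the same factor, so the two must be kept together. Your remark that bounding by $\underline p\,\bar\delta_q$ "requires $p_k$ to stay near its lower bound" holds only for the forcing term in isolation.

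The missing idea is to compare against the step-wise fixed point rather than unroll a uniform linear recursion. Set $C=\bar\delta_m/(\underline\alpha\,\underline p)+\bar\delta_q/\underline\alpha$, which equals the right-hand side of \eqref{eq:app_batch_floor}. Since $\alpha_\ell\ge\underline\alpha$ and $p_\ell\ge\underline p$,
\[
\alpha_\ell p_\ell C
=\frac{\alpha_\ell p_\ell}{\underline\alpha\,\underline p}\,\bar\delta_m+\frac{\alpha_\ell}{\underline\alpha}\,p_\ell\bar\delta_q
\ge\bar\delta_m+p_\ell\bar\delta_q
\ge\delta_{m,\ell}+p_\ell\delta_{q,\ell}.
\]
No upper bound on $p_\ell$ is needed here. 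Substituting into the one-step recursion gives $x_{\ell+1}-C\le(1-\alpha_\ell p_\ell)(x_\ell-C)$. Since $0\le 1-\alpha_\ell p_\ell\le 1-\underline\alpha\,\underline p$, taking positive parts yields $[x_{\ell+1}-C]_+\le(1-\underline\alpha\,\underline p)\,[x_\ell-C]_+$, and hence $\limsup_\ell x_\ell\le C$. This is how the paper closes the argument, and with it your proof would be complete without any extra restriction on $p_k$.
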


\begin{proof}
Fix the history $\mathcal F_k$ and write
\[
\tilde R_k
=
\mathbb E
\!\left[
R(\tilde{\mathcal B}_k)\mid\mathcal F_k
\right].
\]
By Assumption~\ref{ass:app_mcsa},
\begin{equation}
\tilde R_k
\le
R(\mathcal B_k)+\delta_{m,k}.
\label{eq:app_mcsa_step}
\end{equation}
Conditioning on the batch gate gives two cases. If $Z_k=0$, then
$\hat{\mathcal B}_k=\tilde{\mathcal B}_k$, so the conditional expected
residual is $\tilde R_k$. If $Z_k=1$, then
Assumption~\ref{ass:app_cfs_contract} gives
\begin{equation}
\mathbb E
\!\left[
R(\hat{\mathcal B}_k)
\mid
\mathcal F_k,Z_k=1
\right]
\le
(1-\alpha_k)\tilde R_k+\delta_{q,k}.
\label{eq:app_gate_one}
\end{equation}
Since the same $Z_k$ is applied to the whole batch, the law of total
expectation over this single gate yields
\begin{equation}
\begin{aligned}
&
\mathbb E
\!\left[
R(\hat{\mathcal B}_k)
\mid\mathcal F_k
\right] \\
&=
(1-p_k)
\mathbb E
\!\left[
R(\hat{\mathcal B}_k)
\mid\mathcal F_k,Z_k=0
\right] \\
&\quad+
p_k
\mathbb E
\!\left[
R(\hat{\mathcal B}_k)
\mid\mathcal F_k,Z_k=1
\right] \\
&\le
(1-p_k)\tilde R_k
+p_k\bigl((1-\alpha_k)\tilde R_k+\delta_{q,k}\bigr)\\
&=
(1-\alpha_kp_k)\tilde R_k
+p_k\delta_{q,k}.
\end{aligned}
\label{eq:app_gate_average}
\end{equation}
Substituting \eqref{eq:app_mcsa_step} into
\eqref{eq:app_gate_average} gives
\[
\mathbb E
\!\left[
R(\hat{\mathcal B}_k)
\mid\mathcal F_k
\right]
\le
(1-\alpha_kp_k)R(\mathcal B_k)
+(1-\alpha_kp_k)\delta_{m,k}
+p_k\delta_{q,k}.
\]
Because $0\le1-\alpha_kp_k\le1$, this implies
\[
\mathbb E
\!\left[
R(\hat{\mathcal B}_k)
\mid\mathcal F_k
\right]
\le
(1-\alpha_kp_k)R(\mathcal B_k)
+\delta_{m,k}
+p_k\delta_{q,k}.
\]
Taking expectation over $\mathcal F_k$ proves
\eqref{eq:app_batch_recursion}.

For the floor bound, define the progress index $\ell$ over the phase and
let
\[
C=
\frac{\bar\delta_m}{\underline\alpha\,\underline p}
+
\frac{\bar\delta_q}{\underline\alpha}.
\]
For every step in the phase,
\[
\alpha_kp_k C
\ge
\bar\delta_m+p_k\bar\delta_q
\ge
\delta_{m,k}+p_k\delta_{q,k},
\]
because $\alpha_k\ge\underline\alpha$ and
$p_k\ge\underline p$. Subtracting $C$ from
\eqref{eq:app_batch_recursion} gives
\[
\mathbb E[R_{\ell+1}]-C
\le
(1-\alpha_\ell p_\ell)
\bigl(\mathbb E[R_\ell]-C\bigr).
\]
Since $1-\alpha_\ell p_\ell\le
1-\underline\alpha\underline p<1$, the positive part satisfies
\[
\bigl[\mathbb E[R_{\ell+1}]-C\bigr]_+
\le
(1-\underline\alpha\underline p)
\bigl[\mathbb E[R_\ell]-C\bigr]_+ .
\]
Repeated application yields
\[
\bigl[\mathbb E[R_{\ell+n}]-C\bigr]_+
\le
(1-\underline\alpha\underline p)^n
\bigl[\mathbb E[R_\ell]-C\bigr]_+ .
\]
Taking the upper limit as $n\to\infty$ proves
\eqref{eq:app_batch_floor}, because
$C=(\bar\delta_m+\underline p\,\bar\delta_q)/
(\underline\alpha\,\underline p)$.
\end{proof}

\paragraph{Interpretation.}
When $p_k=0$, the bound reduces to the MCSA proposal error, so no hard
projection contraction is claimed. When $p_k>0$, the scheduled
projection contributes the multiplicative factor
$1-\alpha_kp_k$. Increasing $I_k$ and $\mathcal H_k$ can reduce the
projection floor $\delta_{q,k}$, while increasing $p_k$ invokes the
hard correction more frequently. This is exactly the role of the
adaptive scheduler in \eqref{eq:compute_budget}--\eqref{eq:compute_dual_update}.

\subsection{Approximate Primal-Dual Interpretation}

The iALM update in the main text is
\begin{equation}
\lambda_{k-1}
=
\Pi_{\mathbb R_+}
\!\left(
(1-\eta_\lambda)\lambda_k
+\rho_k\tilde r_k
\right),
\label{eq:app_lam_update}
\end{equation}
with the hysteretic penalty update in \eqref{eq:rho_update_nonmonotone}.
Thus, $\lambda_k$ increases when the batch residual persists and decays
when the residual is inactive. The CFS schedule is regulated by the
compute dual
\begin{equation}
\nu_{k-1}
=
\Pi_{\mathbb R_+}
\!\left(
\nu_k+\eta_\nu(c_k-B)
\right).
\label{eq:app_nu_update}
\end{equation}

\begin{table*}[tb!]
\centering
\scriptsize
\setlength{\tabcolsep}{4pt}
\renewcommand{\arraystretch}{1.04}
\begin{tabular}{p{0.12\textwidth}p{0.18\textwidth}p{0.28\textwidth}p{0.34\textwidth}}
\toprule
Benchmark & Category & Parameter & Value \\
\midrule
\multirow{17}{*}{Single2D}
& \multirow{7}{*}{Rollout model}
& State & $s=[x,y]^\top$ \\
& & Control & $u=[v_x,v_y]^\top$ \\
& & Dynamics
& $s^{t+1}=\Pi_{\mathcal X}(s^t+\Delta t\,u^t)$ \\
& & Time step $\Delta t$ & $0.05$ \\
& & Horizon $H$ & $64$ \\
& & Control limit $u_{\max}$ & $1.0$ \\
& & Workspace projection $\mathcal X$ & $[-2,2]^2$ \\
\cmidrule(lr){2-4}
& \multirow{7}{*}{Obstacles}
& Map window & $[-1.5,1.0]\times[-2.0,0.5]$ \\
& & Robot radius & $0.05$ \\
& & Level $L0$ & no obstacles \\
& & Levels $L1$--$L3$ & $3,6,10$ convex obstacles \\
& & Levels $L4$--$L6$ & $18,24,28$ dense convex obstacles \\
& & Levels $L7$--$L10$ & $18,24,30,36$ union obstacles \\
& & Union obstacle size & $2$ primitives each \\
\cmidrule(lr){2-4}
& \multirow{3}{*}{Evaluation}
& Seeds & $10$ \\
& & Modes per seed & $20$ \\
& & Metrics & SSR, cost, time, density, nonconvexity \\
\midrule
\multirow{16}{*}{D3IL 7-DoF}
& \multirow{7}{*}{Rollout model}
& State & $[x_{\mathrm{ee}},y_{\mathrm{ee}},q_1,\ldots,q_7]^\top$ \\
& & Control & $\dot q\in\mathbb R^7$ \\
& & Joint update & $q^{t+1}=q^t+\Delta t\,\dot q^t$ \\
& & End-effector update
& $p_{\mathrm{ee}}^{t+1}=p_{\mathrm{ee}}^t+\Delta t\,J_{xy}(q^t)\dot q^t$ \\
& & Time step $\Delta t$ & $0.03$ \\
& & Horizon $H$ & $64$ for model-based planners \\
& & Joint-velocity limit & $\|\dot q\|_\infty\le0.8$ \\
\cmidrule(lr){2-4}
& \multirow{5}{*}{Obstacles and goal}
& Obstacle centers & six fixed centers in the workspace plane \\
& & Obstacle radii & $0.03,0.04,0.05$ \\
& & End-effector radius & $0.01$ \\
& & Target set & horizontal line at $y=0.35$ \\
& & Success condition & reaches target line and remains collision-free \\
\cmidrule(lr){2-4}
& \multirow{4}{*}{Evaluation}
& Seeds & $5$ \\
& & Modes per seed & $20$ \\
& & Pretrained-baseline horizon & $8$ with plan-once chunking \\
& & Metrics & SSR, violation, time, path length \\
\bottomrule
\end{tabular}
\caption{Experimental environment details for the two benchmarks.}
\label{tab:app_env_details}
\end{table*}

Equations \eqref{eq:app_lam_update}--\eqref{eq:app_nu_update} are
projected dual-ascent steps on the feasibility residual and the average
compute residual, respectively. Theorem~\ref{thm:app_batch_recursion}
connects these updates to the denoising residual: persistent violations
raise $(\lambda_k,\rho_k)$ and the scheduled hard-projection effort,
whereas small residuals relax enforcement and save computation. The
claim is local, because CFS gives a hard guarantee for the convexified
constraints selected in $\mathcal A_{k,m}$; global nonconvex feasibility
depends on the quality of the local half-spaces, active-set coverage,
and repeated relinearization.

\section{Experimental Environment Detail}
\label{app:env}

\subsection{2D Constrained Obstacle Avoidance}
The Single2D benchmark plans directly in the planar workspace. The
state is the planar position $s^t=[x^t,y^t]^\top$, and the action is
velocity $u^t=[v_x^t,v_y^t]^\top$. The rollout model is
\begin{equation}
s^{t+1}
=
\Pi_{\mathcal X}
\!\left(
s^t+\Delta t\,
\operatorname{clip}(u^t,-u_{\max},u_{\max})
\right),
\label{eq:app_single2d_dyn}
\end{equation}
where $\Delta t=0.05$, $T=64$, $u_{\max}=1.0$, and
$\mathcal X=[-2,2]^2$ is the box projection used by the environment.
Obstacles are generated inside the map window
$x\in[-1.5,1.0]$ and $y\in[-2.0,0.5]$. The robot radius is $0.05$.

The benchmark has levels $L0$--$L10$. Level $L0$ has no obstacles.
Levels $L1$--$L3$ contain $3$, $6$, and $10$ convex circle/box
obstacles. Levels $L4$--$L6$ contain $18$, $24$, and $28$ denser convex
obstacles, sampled from circles, boxes, and small circles. Levels
$L7$--$L10$ contain $18$, $24$, $30$, and $36$ non-convex union
obstacles, each formed by two primitives. All maps enforce clearance
around the start and target, and the non-convex levels additionally use
connectivity and non-convexity checks during generation. These levels
correspond to the Easy, Constrained, and Union families reported in the
main text.

For each level, we evaluate $10$ random seeds. For each seed and method,
the planner returns $20$ trajectory modes. The reported metrics are Safe
\& Success Rate (SSR), final planning cost, planning time, obstacle
density, and non-convexity statistics when applicable.

\subsection{D3IL 7-DoF Arm Avoidance}
The D3IL experiment uses a lifted planning state
\[
s^t=
[x_{\mathrm{ee}}^t,y_{\mathrm{ee}}^t,q_1^t,\ldots,q_7^t]^\top
\in\mathbb R^9,
\]
and the action is the joint velocity
$u^t=\dot q^t\in\mathbb R^7$. The model-based planners use the lifted
planning dynamics
\begin{equation}
\begin{aligned}
q^{t+1}
&=
q^t+\Delta t\,\dot q^t,\\
p_{\mathrm{ee}}^{t+1}
&=
p_{\mathrm{ee}}^t
+\Delta t\,J_{xy}(q^t)\dot q^t,
\end{aligned}
\label{eq:app_d3il_dyn}
\end{equation}
where $p_{\mathrm{ee}}=[x_{\mathrm{ee}},y_{\mathrm{ee}}]^\top$ and
$J_{xy}$ is the planar end-effector Jacobian. We use
$\Delta t=0.03$, $T=64$, and joint-velocity limit $0.8$ for the
model-based planners.

The obstacle layout matches the D3IL avoiding task: six cylindrical
obstacles are projected to 2D circles with centers
$(0.5,-0.1)$, $(0.425,0.08)$, $(0.575,0.08)$,
$(0.35,0.26)$, $(0.5,0.26)$, and $(0.65,0.26)$. The obstacle radius is
varied by level as $0.03$, $0.04$, and $0.05$ in the main experiment,
and the end-effector radius is $0.01$. The target is a horizontal line
at $y=0.35$ with four candidate target positions along the line. A
trajectory is counted as successful when it reaches the target line
within the success margin and remains collision-free.

For D3IL, we evaluate $5$ seeds and $20$ modes per seed. SafeDiffuser
and DPCC-C use the pretrained D3IL diffusion horizon $8$ and plan-once
chunking, while the model-based planners use horizon $64$ with online
rollouts. The metrics are SSR, violation rate, planning time, and path
length over safe and successful executions.

\section{Baselines Detail}
\label{app:baselines}

This section expands only the baselines that are actually evaluated in
the main text: MBD, EB-MBD, MDOC, SafeDiffuser, DPCC-C, and our two
ablation variants MD-COAS-A and MD-COAS-F.

\textbf{MBD.}
MBD~\cite{pan2024model} is the training-free model-based diffusion
optimizer on which our model-based comparisons are built. At reverse
step $k$, it samples $M$ denoised candidates from the same proposal used
in the main text,
\begin{equation}
\tilde\tau_{k,m}\sim
\Omega_k
\triangleq
\mathcal N
\!\left(
\frac{\tau_k}{\sqrt{\bar\alpha_k}},
\frac{1-\bar\alpha_k}{\bar\alpha_k}\mathbf I
\right),
\label{eq:app_mbd_proposal}
\end{equation}
rolls the corresponding controls out through the known dynamics, and
computes weights from the task cost:
\begin{equation}
w_{k,m}
=
\frac{
\exp(-\mathcal J(\tilde\tau_{k,m})/T_k)
}{
\sum_{\ell=1}^{M}
\exp(-\mathcal J(\tilde\tau_{k,\ell})/T_k)
}.
\label{eq:app_mbd_weights}
\end{equation}
The weighted mean is then used in the MCSA score estimate,
\begin{equation}
\bar\tau_k=\sum_{m=1}^{M}w_{k,m}\tilde\tau_{k,m},
\quad
\hat{\mathbf S}_k
=
-\frac{\tau_k}{1-\bar\alpha_k}
+\frac{\sqrt{\bar\alpha_k}}{1-\bar\alpha_k}\bar\tau_k .
\label{eq:app_mbd_score}
\end{equation}
MBD is therefore dynamics-aware, model-based, diffusion-based, and
non-data-driven. It is a necessary baseline because it isolates the
effect of adding safety enforcement. Its limitation in our experiments
is that the target density contains only the task objective, so obstacle
avoidance appears only indirectly through the task cost and is not
guaranteed.

\textbf{EB-MBD.}
EB-MBD~\cite{mishra2025eb} augments MBD with an emerging-barrier term.
Let
\begin{equation}
d_{\min}(\tau)
=
\min_t
\left(
\min_j \phi_j(q^t)-r_{\mathrm{robot}}
\right)
\label{eq:app_eb_dmin}
\end{equation}
be the minimum clearance along a rollout. EB-MBD uses a scheduled offset
$o_k$ and adds a logarithmic barrier of the form
\begin{equation}
\mathcal B_k(\tau)
=
\begin{cases}
-\mu\log\!\left(\operatorname{clip}
(d_{\min}(\tau)+o_k,\epsilon,1)\right),
& d_{\min}(\tau)+o_k>0,\\
\infty,
& \text{otherwise}.
\end{cases}
\label{eq:app_eb_barrier}
\end{equation}
The diffusion weights are then computed from
$\mathcal J(\tau)+\mathcal B_k(\tau)$, equivalently
\begin{equation}
p_k^{\mathrm{EB}}(\tau)
\propto
\exp\!\left(
-\frac{\mathcal J(\tau)+\mathcal B_k(\tau)}{T_k}
\right).
\label{eq:app_eb_target}
\end{equation}
This baseline is relevant because it represents a soft safety prior
inside model-based diffusion. Unlike MD-COAS, EB-MBD keeps the barrier
schedule fixed and does not solve a hard projection QP that can correct
a rollout back into a local feasible set.

\textbf{MDOC.}
MDOC~\cite{he2026} is the hard-filter model-based diffusion baseline. It keeps the MBD
sampling and score update, but inserts a per-step CBF-QP safety filter
inside the rollout. For a signed-distance barrier
\begin{equation}
h_j(q^t)=\phi_j(q^t)-r_{\mathrm{robot}},
\label{eq:app_mdoc_h}
\end{equation}
the single-integrator CBF constraint can be written as
\begin{equation}
\nabla h_j(q^t)^\top u^t
\ge
-\eta h_j(q^t),
\label{eq:app_mdoc_cbf}
\end{equation}
and the filtered action is obtained from
\begin{equation}
\begin{aligned}
u_{\mathrm{safe}}^t
\in
\mathop{\mathrm{arg\,min}}_{u}
&\quad
\frac{1}{2}\|u-\tilde u^t\|_2^2\\
\mathrm{s.t.}
&\quad
\nabla h_j(q^t)^\top u
\ge
-\eta h_j(q^t),
\quad \forall j\in\mathcal J_t .
\end{aligned}
\label{eq:app_mdoc_qp}
\end{equation}
In D3IL, the workspace gradient is lifted to joint velocity by
$J_{xy}(q^t)$:
\begin{equation}
\nabla h_j(p_{\mathrm{ee}}^t)^\top
J_{xy}(q^t)\dot q^t
\ge
-\eta h_j(p_{\mathrm{ee}}^t).
\label{eq:app_mdoc_joint}
\end{equation}
MDOC is a strong baseline for testing whether hard safety filtering
alone is sufficient. The key difference is that MDOC corrects each time
step independently with a CBF-QP, while MD-COAS solves a trajectory-level
CFS-QP over the full control sequence and combines this hard correction
with the adaptive iALM soft prior.

\textbf{SafeDiffuser.}
SafeDiffuser~\cite{xiao2023safediffuser} is a model-free diffusion
planner trained from data. A learned denoiser proposes the next
trajectory/action sample, and a CBF-style QP modifies the denoising
output to maintain safety. A representative correction has the form
\begin{equation}
\begin{aligned}
a_{\mathrm{safe}}^t
\in
\mathop{\mathrm{arg\,min}}_{a}
&\quad
\frac{1}{2}\|a-a_\theta^t\|_2^2\\
\mathrm{s.t.}
&\quad
h(s^{t+1})\ge(1-\kappa)h(s^t),
\quad
s^{t+1}=f(s^t,a).
\end{aligned}
\label{eq:app_safediffuser_qp}
\end{equation}
This baseline is included because it represents pretrained safe
diffusion planning with invariance-style correction. It differs from
MD-COAS in two ways: its score comes from an offline-trained diffusion
model rather than online MCSA rollouts, and its safety correction is not
the adaptive iALM+CFS mechanism used by our model-based planner.

\begin{table*}[tb!]
\small
%==============================================================================
\newcolumntype{O}{>{          \arraybackslash}m{0.35 cm}}
\newcolumntype{A}{>{          \arraybackslash}m{3.3 cm}}
\newcolumntype{B}{>{\centering\arraybackslash}m{3.7 cm}}
\newcolumntype{C}{>{\centering\arraybackslash}m{1.45 cm}}
\newcolumntype{D}{>{\centering\arraybackslash}m{1.45 cm}}
\newcolumntype{E}{>{\centering\arraybackslash}m{1.45 cm}}
\newcolumntype{F}{>{\centering\arraybackslash}m{1.6 cm}}
\newcolumntype{G}{>{\centering\arraybackslash}m{4 cm}}
\setlength\tabcolsep{0pt}
\renewcommand{\arraystretch}{1.5}
\providecommand{\cm}{\checkmark}
\providecommand{\xm}{$\times$}
\begin{center}
\begin{tabular}{O|ABCDEFG}
\toprule

\multicolumn{2}{c}{\textbf{Method}}
& {\textbf{Paradigm}}
& {Dynamics \smash{$\hphantom{^{X}}$}Aware}
& {Model- \smash{$\hphantom{^{X}}$}Based}
& {Non-Data \smash{$\hphantom{^{X}}$}Driven}
& {Smoothness \smash{$\hphantom{^{X}}$}Guarantee}
& {Constraint Handling}\\
\midrule

\parbox[t]{2mm}{\multirow{2}{*}{\rotatebox[origin=c]{90}
{\textbf{Model-Free}\hspace{0em}}}}
& ~~SafeDiffuser~\cite{xiao2023safediffuser}
& Pretrained Diffusion + CBF
& \xm & \xm & \xm & \xm
& CBF-Based Correction \\
& ~~DPCC-C~\cite{romer2024diffusion}
& Pretrained Diffusion + Projection
& $\mathcal{P}$ & \xm & \xm & \xm
& Tightening + Projection \\
\midrule

\parbox[t]{2mm}{\multirow{6}{*}{\rotatebox[origin=c]{90}
{\textbf{Model-Based Diffusion}\hspace{2em}}}}
& ~~MBD~\cite{pan2024model}
& MCSA Diffusion
& \cm & \cm & \cm & \xm
& N/A \\
& ~~EB-MBD~\cite{mishra2025eb}
& Diffusion + EB Prior
& \cm & \cm & \cm & \xm
& Soft Emerging Barrier \\
& ~~MDOC~\cite{he2026}
& Diffusion + CBF-QP
& \cm & \cm & \cm & \xm
& Per-Step Hard CBF-QP \\
\cmidrule{2-8}
& ~~\textbf{MD-COAS-A (Ours)}
& Diffusion + Adaptive iALM
& \cm & \cm & \cm & \xm
& Adaptive Soft Constraints \\
& ~~\textbf{MD-COAS-F (Ours)}
& Diffusion + Fixed CFS-QP
& \cm & \cm & \cm & \cm
& Adaptive Soft Constraints + Fixed Hard CFS-QP \\
\cmidrule{2-8}
& ~~\textbf{MD-COAS (Ours)}
& Diffusion + CO + AS
& \cm & \cm & \cm & \cm
& Adaptive iALM + Hard CFS-QP \\

\bottomrule

\end{tabular}
\caption{Comparison of the diffusion motion-planning
baselines and our ablations. The table follows the visual style of the
related-work template but only includes methods that are compared in the
main experiments. $\mathcal{P}$ denotes partially aware. }
\label{tab:app_baseline_compare}
\vspace{-0.5cm}
\end{center}
\end{table*}

\textbf{DPCC-C.}
DPCC-C~\cite{romer2024diffusion} is a diffusion predictive control
baseline for constrained planning with a pretrained diffusion prior. It
handles constraints by tightening the feasible set,
\begin{equation}
\mathcal C_\delta
=
\{\tau:\ g_i(\tau)\le-\delta,\ \forall i\},
\label{eq:app_dpcc_tight}
\end{equation}
and projecting candidate denoising outputs onto the tightened set,
\begin{equation}
\Pi_{\mathcal C_\delta}(\tau)
=
\mathop{\mathrm{arg\,min}}_{z}
\frac{1}{2}\|z-\tau\|_2^2
\quad
\mathrm{s.t.}\quad
z\in\mathcal C_\delta .
\label{eq:app_dpcc_proj}
\end{equation}
The ``C'' variant selects the candidate with minimum cumulative
projection cost,
\begin{equation}
\tau^\star
=
\mathop{\mathrm{arg\,min}}_{\tau^i}
\sum_k
\left\|
\Pi_{\mathcal C_\delta}(\tau_k^i)-\tau_k^i
\right\|_2^2 .
\label{eq:app_dpcc_select}
\end{equation}
DPCC-C is relevant because it combines diffusion sampling with explicit
constraint correction. The difference is that it is data-driven and
uses a fixed projection/tightening mechanism, while MD-COAS is
training-free and adapts both soft and hard constraint enforcement
across the reverse process.

\textbf{MD-COAS-A (Ours).}
MD-COAS-A is our soft-prior ablation. It keeps the adaptive iALM target
\begin{equation}
p_k^{A}(\tau)
\propto
\exp\!\left(
-\frac{
\mathcal J(\tau)
+\lambda_k^\top\bar g(\tau)
+\frac{\rho_k}{2}\|\bar g(\tau)\|_2^2
}{\beta}
\right),
\label{eq:app_mdcoasa}
\end{equation}
but disables the CFS-QP by setting $p_k=0$ and
$I_k=\mathcal H_k=0$. It tests whether adaptive soft feasibility alone
can solve tight non-convex maps. The main-text results show that this
helps convergence but cannot provide strict collision correction in the
hardest maps.

\textbf{MD-COAS-F (Ours).}
MD-COAS-F is our fixed-schedule hard-projection ablation. It uses the
same iALM+CFS components as MD-COAS, but fixes
\begin{equation}
\lambda_k=\lambda,\quad
\rho_k=\rho,\quad
p_k=1,\quad
I_k=I,\quad
\mathcal H_k=\mathcal H .
\label{eq:app_mdcoasf}
\end{equation}
It tests whether adding CFS projection is enough without adaptive
scheduling. It is safer than soft-only methods, but it spends projection
effort uniformly even when the batch is already feasible.

\textbf{MD-COAS (Ours).}
The full method uses both the adaptive iALM soft prior and the
trajectory-level CFS hard operator. It updates
$(\lambda_k,\rho_k)$ from the residual
\eqref{eq:dual_update_nonmonotone}--\eqref{eq:rho_update_nonmonotone}
and schedules $(p_k,I_k,\mathcal H_k)$ under the compute dual
\eqref{eq:compute_dual_update}. Thus, MD-COAS is the only evaluated
method that is simultaneously model-based, training-free,
diffusion-based, equipped with a trajectory-level hard projection, and
adaptive across the denoising process.

\section{Experiment Hyperparameters}
\label{app:hparams}

We summarize the experimental hyperparameters in four tables. Table
\ref{tab:app_common_hparams} reports the shared planning and diffusion
settings, including state/action dimensions, horizon, reverse diffusion
steps, rollout count, temperature, diffusion noise schedule, and
evaluation seeds/modes. Table~\ref{tab:app_baseline_hparams} lists the
method-specific parameters for the evaluated baselines, including MBD,
EB-MBD, MDOC, SafeDiffuser, and DPCC-C. Table
\ref{tab:app_ours_hparams} reports the hyperparameters of our ablation
variants and full MD-COAS, including the iALM initialization, CFS
projection ranges, QP tolerance, and environment-specific margins.
Finally, Table~\ref{tab:app_scheduler_hparams} gives the adaptive
scheduler parameters shared by MD-COAS across benchmarks.

\begin{table*}[tb!]
\centering
\scriptsize
\setlength{\tabcolsep}{4pt}
\renewcommand{\arraystretch}{1.2}
\begin{tabular}{p{0.19\textwidth}p{0.18\textwidth}cccccccccc}
\toprule
Benchmark
& Planner class
& State
& Act.
& $H$
& $\Delta t$
& $K$
& $M$
& $T_k$
& $\beta_0$
& $\beta_T$
& Seeds/Modes \\
\midrule
Single2D
& Model-based diffusion
& $2$
& $2$
& $64$
& $0.05$
& $100$
& $64$
& $0.5$
& $10^{-5}$
& $10^{-2}$
& $10/20$ \\
\multirow{2}{*}{D3IL 7-DoF Arm Avoidance}
& Model-based diffusion
& $9$
& $7$
& $64$
& $0.03$
& $100$
& $64$
& $0.1$
& $10^{-4}$
& $10^{-2}$
& $5/20$ \\
& Pretrained diffusion
& learned
& learned
& $8$
& $1.0$
& $20$
& --
& --
& \multicolumn{2}{c}{pretrained}
& $5/20$ \\
\bottomrule
\end{tabular}
\caption{Common planning and diffusion hyperparameters. $H$ is the
planning horizon, $K$ is the number of reverse diffusion steps, and $M$
is the number of model rollouts per reverse step.}
\label{tab:app_common_hparams}
\end{table*}

\begin{table*}[tb!]
\centering
\scriptsize
\setlength{\tabcolsep}{4pt}
\renewcommand{\arraystretch}{1.05}
\begin{tabular}{p{0.14\textwidth}p{0.36\textwidth}p{0.21\textwidth}p{0.21\textwidth}}
\toprule
Method & Parameter & Single2D & D3IL arm \\
\midrule
\multicolumn{4}{l}{\textit{Model-based diffusion baselines}}\\
\midrule
\multirow{3}{*}{MBD}
& Terminal weight $w_T$ & $100$ & -- \\
& Action noise $\sigma_u$ & $0.3$ & -- \\
& Safety layer & none & -- \\
\cmidrule(lr){1-4}
\multirow{5}{*}{EB-MBD}
& Barrier coefficient $\mu$ & $50.0$ & $5.0$ \\
& Barrier exponent $\alpha$ & $1.0$ & $2.0$ \\
& Barrier bound & $0.8$ & $0.15$ \\
& Barrier margin & $0.05$ & $0.01$ \\
& Terminal weight $w_T$ & $100$ & $170$ \\
\cmidrule(lr){1-4}
\multirow{9}{*}{MDOC}
& CBF-QP form & per-step planar & per-step joint-lift \\
& CBF activation $\tau_{\mathrm{cbf}}$ & $0.005$ & $0.01$ \\
& CBF gain $\eta_{\mathrm{cbf}}$ & $1.5$ & $1.0$ \\
& CBF margin & $0.04$ & $0.01$ \\
& Base diffusion $\beta$ & $0.05$ & $0.05$ \\
& Terminal weight $w_T$ & $100$ & $160$ \\
& Guide weight $w_g$ & -- & $20$ \\
& QP probability $p_{\mathrm{QP}}$ & $1.0$ & $1.0$ \\
& QP iterations $I_{\mathrm{QP}}$ & $10$ & $10$ \\
\midrule
\multicolumn{4}{l}{\textit{Pretrained diffusion baselines}}\\
\midrule
\multirow{5}{*}{SafeDiffuser}
& Horizon & -- & $8$ \\
& Reverse steps & -- & $20$ \\
& Batch size & -- & $8$ \\
& Safety correction & -- & all denoising steps \\
& Plan-once chunks & -- & $25$ \\
\cmidrule(lr){1-4}
\multirow{6}{*}{DPCC-C}
& Horizon & -- & $8$ \\
& Reverse steps & -- & $20$ \\
& Batch size & -- & $4$ \\
& Constraint handling & -- & tightened projection \\
& Projection solver & -- & SciPy \\
& Plan-once chunks & -- & $25$ \\
\bottomrule
\end{tabular}
\caption{Baseline hyperparameters. The dash indicates that the method is
not evaluated on that benchmark.}
\label{tab:app_baseline_hparams}
\end{table*}

\begin{table*}[tb!]
\centering
\scriptsize
\setlength{\tabcolsep}{4pt}
\renewcommand{\arraystretch}{1.05}
\begin{tabular}{p{0.14\textwidth}p{0.36\textwidth}p{0.21\textwidth}p{0.21\textwidth}}
\toprule
Method & Parameter & Single2D & D3IL 7-DoF Arm Avoidance \\
\midrule
\multirow{10}{*}{MD-COAS-A}
& Initial dual $\lambda_0$ & $0$ & -- \\
& Initial penalty $\rho_0$ & $1$ & -- \\
& Projection probability $p_{\min}$ & $0$ & -- \\
& Projection probability $p_{\max}$ & $0$ & -- \\
& Active-set size $\mathcal H_{\min}$ & $0$ & -- \\
& Active-set size $\mathcal H_{\max}$ & $0$ & -- \\
& CFS iterations $I_{\min}$ & $0$ & -- \\
& CFS iterations $I_{\max}$ & $0$ & -- \\
& QP tolerance $\epsilon_{\min},\epsilon_{\max}$ & $0,0$ & -- \\
& CFS margin & $0.05$ & -- \\
\cmidrule(lr){1-4}
\multirow{9}{*}{MD-COAS-F}
& Fixed dual $\lambda$ & $300$ & -- \\
& Fixed penalty $\rho$ & $500$ & -- \\
& Projection probability $p$ & $1.0$ & $1.0$ \\
& Active-set size $\mathcal H$ & $8$ & $8$ \\
& CFS iterations $I$ & $5$ & $5$ \\
& QP tolerance $\epsilon$ & $10^{-4}$ & $10^{-4}$ \\
& CFS margin & $0.25$ & $0.35$ \\
& Max constraints per point & $10$ & $8$ \\
& Scheduler margin & $0.05$ & $0.02$ \\
\cmidrule(lr){1-4}
\multirow{13}{*}{MD-COAS}
& Initial dual $\lambda_0$ & $0$ & $0$ \\
& Initial penalty $\rho_0$ & $1$ & $1$ \\
& Projection probability $p_{\min}$ & $0$ & $0$ \\
& Projection probability $p_{\max}$ & $1$ & $1$ \\
& Active-set size $\mathcal H_{\min}$ & $1$ & $1$ \\
& Active-set size $\mathcal H_{\max}$ & $8$ & $8$ \\
& CFS iterations $I_{\min}$ & $1$ & $1$ \\
& CFS iterations $I_{\max}$ & $5$ & $5$ \\
& QP tolerance $\epsilon_{\min}$ & $10^{-5}$ & $10^{-5}$ \\
& QP tolerance $\epsilon_{\max}$ & $10^{-2}$ & $10^{-2}$ \\
& CFS margin & $0.05$ & $0.35$ \\
& Max constraints per point & $5$ & $8$ \\
& Scheduler margin & $0.05$ & $0.02$ \\
\bottomrule
\end{tabular}
\caption{Hyperparameters for our ablation variants and full MD-COAS.
The dash indicates that the variant is not evaluated on that benchmark.}
\label{tab:app_ours_hparams}
\end{table*}

\begin{table*}[tb!]
\centering
\scriptsize
\setlength{\tabcolsep}{4pt}
\renewcommand{\arraystretch}{1.05}
\begin{tabular}{p{0.20\textwidth}p{0.36\textwidth}p{0.16\textwidth}p{0.20\textwidth}}
\toprule
Scheduler block & Parameter & Value & Main-text definition \\
\midrule
\multirow{2}{*}{Residual signal}
& Quantile level for $r_k$ & $0.9$ & Eq.~\eqref{eq:primal_residual} \\
& Dead-zone tolerance $\xi$ & $5\times10^{-4}$ & Eq.~\eqref{eq:deadzone_def} \\
\cmidrule(lr){1-4}
\multirow{7}{*}{iALM update}
& Dual forgetting $\eta_\lambda$ & $0.02$ & Eq.~\eqref{eq:dual_update_nonmonotone} \\
& Penalty lower bound $\rho_{\min}$ & $0.5$ & Eq.~\eqref{eq:rho_update_nonmonotone} \\
& Penalty upper bound $\rho_{\max}$ & $500$ & Eq.~\eqref{eq:rho_update_nonmonotone} \\
& High residual threshold $\bar r$ & $5\times10^{-4}$ & Eq.~\eqref{eq:rho_update_nonmonotone} \\
& Low residual threshold $\underline r$ & $10^{-4}$ & Eq.~\eqref{eq:rho_update_nonmonotone} \\
& Penalty increase factor $\gamma_\uparrow$ & $2.0$ & Eq.~\eqref{eq:rho_update_nonmonotone} \\
& Penalty decrease factor $\gamma_\downarrow$ & $1.5$ & Eq.~\eqref{eq:rho_update_nonmonotone} \\
\cmidrule(lr){1-4}
\multirow{5}{*}{Hard projection budget}
& Projection probability $p_k$ & scheduled in $[0,1]$ & Eq.~\eqref{eq:qp_gate} \\
& CFS iterations $I_k$ & scheduled & Eq.~\eqref{eq:qp_gate} \\
& Active-set size $\mathcal H_k$ & scheduled & Eq.~\eqref{eq:qp_gate} \\
& Expected effort budget $B$ & $8.0$ & Eq.~\eqref{eq:compute_budget} \\
& Cost proxy $c_k$ & $p_kI_k\mathcal H_k$ & Eq.~\eqref{eq:compute_budget} \\
\cmidrule(lr){1-4}
\multirow{2}{*}{Compute dual}
& Compute dual $\nu_k$ & projected to $\mathbb R_+$ & Eq.~\eqref{eq:compute_dual_update} \\
& Compute dual step $\eta_\nu$ & $0.05$ & Eq.~\eqref{eq:compute_dual_update} \\
\bottomrule
\end{tabular}
\caption{Adaptive scheduler quantities used by MD-COAS and explicitly
defined in the main text. Environment-specific CFS ranges and margins
are listed in Table~\ref{tab:app_ours_hparams}.}
\label{tab:app_scheduler_hparams}
\end{table*}

\end{document}